  \providecommand\BibTeX{{%
    \normalfont B\kern-0.5em{\scshape i\kern-0.25em b}\kern-0.8em\TeX}}}
\begin{document}

\fancyhead{}

\title{Streaming Graph Neural Networks via Continual Learning}

\author{Junshan Wang}
\affiliation{%
  \institution{Key Laboratory of Machine Perception, Ministry of Education, Peking University}
  \city{Beijing}
  \state{China}
}
\email{wangjunshan@pku.edu.cn}

\author{Guojie Song}
\authornote{Corresponding author.}
\affiliation{%
  \institution{Key Laboratory of Machine Perception, Ministry of Education, Peking University}
  \city{Beijing}
  \state{China}
}
\email{gjsong@pku.edu.cn}

\author{Yi Wu}
\affiliation{%
  \institution{Peking University}
  \city{Beijing}
  \state{China}
}
\email{reyn19990619@pku.edu.cn}

\author{Liang Wang}
\affiliation{%
 \institution{Alibaba group}
 \city{Beijing}
 \state{China}
}
\email{liangbo.wl@alibaba-inc.com}


\begin{abstract}
    \textit{Graph neural networks (GNNs)} have achieved strong performance in various applications. 
    In the real world, network data is usually formed in a streaming fashion. The distributions of patterns that refer to neighborhood information of nodes may shift over time. The GNN model needs to learn the new patterns that cannot yet be captured. But learning incrementally leads to the catastrophic forgetting problem that historical knowledge is overwritten by newly learned knowledge. Therefore, it is important to train GNN model to learn new patterns and maintain existing patterns simultaneously, which few works focus on. 
    In this paper, we propose a streaming GNN model based on continual learning so that the model is trained incrementally and up-to-date node representations can be obtained at each time step. 
    Firstly, we design an approximation algorithm to detect new coming patterns efficiently based on information propagation. Secondly, we combine two perspectives of data replaying and model regularization for existing pattern consolidation. Specially, a hierarchy-importance sampling strategy for nodes is designed and a weighted regularization term for GNN parameters is derived, achieving greater stability and generalization of knowledge consolidation. 
    Our model is evaluated on real and synthetic data sets and compared with multiple baselines. The results of node classification prove that our model can efficiently update model parameters and achieve comparable performance to model retraining. In addition, we also conduct a case study on the synthetic data, and carry out some specific analysis for each part of our model, illustrating its ability to learn new knowledge and maintain existing knowledge from different perspectives. 
\end{abstract}

\begin{CCSXML}
<ccs2012>
<concept>
<concept_id>10003033.10003106.10003114.10011730</concept_id>
<concept_desc>Networks~Online social networks</concept_desc>
<concept_significance>500</concept_significance>
</concept>
<concept>
<concept_id>10003033.10003083.10003090</concept_id>
<concept_desc>Networks~Network structure</concept_desc>
<concept_significance>300</concept_significance>
</concept>
<concept>
<concept_id>10003752.10003809.10010047.10010048</concept_id>
<concept_desc>Theory of computation~Online learning algorithms</concept_desc>
<concept_significance>300</concept_significance>
</concept>
</ccs2012>
\end{CCSXML}

\ccsdesc[500]{Networks~Online social networks}
\ccsdesc[300]{Networks~Network structure}
\ccsdesc[300]{Theory of computation~Online learning algorithms}
\keywords{graph neural networks; continual learning; streaming networks}

\maketitle

\section{Introduction}

Network structures are widely seen in the real world nowadays. As a method that maps nodes into vectors and preserves local structures, network embedding plays an important role in many applications such as node classification and recommendation, which can be categorized in two types: SkipGram models \cite{perozzi2014deepwalk} and GNNs \cite{kipf2016semi,hamilton2017inductive}. Among them, GNNs take advantages of deep learning techniques and are capable of combining node features and local structures together, achieving great improvements than traditional models. 
But most of the existing methods are based on the assumption that networks are static. The model is trained on the entire network, and parameters will not be updated after training.

However, network data is usually formed in a streaming fashion and most real-world networks are continuously evolving over time, which are referred as streaming networks. For example, edges are added or removed in social networks, and attributes of nodes may also change over time. The dynamics give rise to some \textit{new patterns} \cite{webb2016characterizing}, referring to the neighborhood information not captured by the GNN, and at the same time, many \textit{existing patterns} that have been learned by the GNN in the previous network still maintain. For example, new research fields may grow up in citation networks and in different years, users may generate distinct types of social circles. In order to get the updated representations, it is necessary to learn both new and existing patterns, but retraining the GNN once the network changes results in high computational complexity. 

There are some existing researches on streaming network embedding specially, focusing on either capturing network evolutionary patterns or efficiently updating node representations. Our model belongs to the latter type, with typical examples including DANE \cite{li2017attributed} and DNE \cite{du2018dynamic}. They design strategies to update node representations in a transductive setting, which lack scalability and cannot be migrated to the GNN architecture. 
There are some naive incremental learning approaches for GNNs. On one hand, GNNs can efficiently generate new representations by aggregating features from neighborhood based on an inductive framework but the performance will gradually degrade as parameters do not absorb knowledge from current networks. On the other hand, applying traditional online learning approaches causes \textit{catastrophic forgetting} problem. That is, existing patterns will be overwritten and forgotten abruptly when new patterns are incorporated, resulting in the inability to obtain good representations for previous nodes through the current GNN. 
Therefore, training GNNs incrementally to ensure that models are updated to learn new patterns and consolidate existing patterns simultaneously during network evolution is meaningful.

In supervised learning tasks, data is sampled identically and independently from a fixed distribution, and its learning goal is to minimize the empirical loss for all data. 
\textit{Continual Learning}, also called as life-long learning, is a learning method in a streaming scenario. In the real world, data in different distributions and from different domains arrives in a streaming manner. Storing all data and retraining on each time step is not feasible on large-scale data. Thus, the learning goal of continual learning is to minimize the loss of data under the current task without interfering the data that has been learned before. When the data distribution shifts over time, it can avoid the catastrophic forgetting of previous tasks.  
Hence, we adopt continual learning to address the problem for training GNNs incrementally because network distributions may shift in streaming networks. And our goal is to develop an efficient algorithm that can not only capture new patterns from current networks but also consolidate existing patterns from previous networks.
Yet, the following challenges still remain:
\begin{itemize}
    \item New patterns need to be detected. Due to the complexity of network structures, it is difficult to estimate whether a node is seriously affected and corresponds to a new pattern. Besides, any node of the entire network may be affected, but calculating the influenced degrees of all nodes is time-consuming and not feasible on the large-scale networks.
    \item Existing patterns need to be consolidated. It is unrealistic to save and retrain the entire network, and an alternative is to sample and replay only a small number of nodes. But on one hand, naive sampling strategies are limited by the instability and poor effectiveness. On the other hand, it may lead to the overfitting problem and lacks additional constraints to improve generalization of knowledge maintaining.
\end{itemize}

Given these challenges, we propose a streaming GNN model via continual learning in streaming networks. When distributions of networks shift over time, our model can capture new patterns incrementally and consolidate existing information. 
Firstly, in order to learn new patterns, we propose an approximate method based on the propagation process of changes in the network to mine affected nodes efficiently. 
Then the existing knowledge of the network is modeled and maintained from two complementary perspectives. To improve the effectiveness and stability of data replaying, we design a hierarchy-importance sampling strategy to store some nodes in the memory. At the same time, in order to solve the overfitting problem caused by the small number of replayed nodes, a weighted regularization method for model parameters based on Fisher information is derived. 
We evaluate our model on the real-world and synthetic graphs. Our model outperforms other incremental baselines and achieves lower time complexity than retraining models. In addition, we show that our model can alleviate the problem of catastrophic forgetting through a case study on synthetic data. Each part of the model is also analyzed in detail, further illustrating the superiority of our model in detecting and learning new patterns and consolidating existing patterns.

Overall, our contributions are as follows:
\begin{itemize}
    \item We propose a streaming GNN framework via continual learning that can update model parameters efficiently and generate up-to-date node representations when the network evolves and its distribution shifts over time.
    \item We design an approximation algorithm to detect new patterns. A new data sampling method and a model regularization method are proposed and combined to consolidate the existing patterns in the streaming network.
    \item We conduct experiments on four data sets which show that our streaming GNNs can be updated incrementally with less accuracy lost than other models.
\end{itemize}

\section{Related Work}

\subsection{Network Embedding}
Traditional network embedding methods can be categorized into two types. SkipGram models such as DeepWalk \cite{perozzi2014deepwalk} and LINE \cite{tang2015line} are limited to their transductive setting. GNN models such as GCN \cite{kipf2016semi}, GraphSAGE \cite{hamilton2017inductive}, GAT \cite{velivckovic2017graph} and GIN \cite{xu2018powerful} learn node representations by aggregating neighborhood features at each layer and often achieve outstanding performance in tasks. But all the aforementioned methods are only designed for static networks.

As real-world graph data often appears streamingly, existing works in dynamic networks mainly focus on two aspects. 
Some of them capture temporal information along with local structure so as to improve the expressive ability of the model, with typical examples like DynamicTriad \cite{zhou2018dynamic}, HTNE \cite{zuo2018embedding}, EPNE \cite{wang2020epne}, EvolveGCN \cite{pareja2019evolvegcn}, DynamicGCN \cite{deng2019learning}, DGNN \cite{ma2018streaming} and ST-GCN \cite{yan2018spatial}. But these models have the shortcoming of high computation complexity as they need to train a new model at each time step. 
Other methods study how to efficiently update node representations when networks change. 
DANE \cite{li2017attributed} proposes online learning for dynamic attributed networks with a spectral embedding manner. DNE \cite{du2018dynamic} and \cite{liu2018streaming} consider that it is unnecessary to update representations of all nodes but only the most affected nodes need to be adjusted during the evolvement of networks. But they are all transductive models and cannot transfer to GNNs. 
As traditional GNNs are inductive models, new representations can be obtained directly but the scalability of models is limited as parameters are fixed after training. 

\subsection{Continual Learning}
The key point of continual learning is to consolidate the knowledge learned from previous data in a streaming scenario to avoid catastrophic forgetting. Recent methods can be divided into three categories. A first family are the regularization-based methods by a parameter regularization, including EWC \cite{kirkpatrick2017overcoming}, SI \cite{zenke2017continual}, MAS \cite{aljundi2018memory} and VCL \cite{nguyen2017variational}. Their implementation is simple and enjoy the beauty of the Bayesian framework. A second family are the replay-based methods that old examples reproduced from a replay buffer such as GEM \cite{lopez2017gradient} and iCaRL \cite{rebuffi2017icarl}, or a generative model \cite{shin2017continual} will be used for retraining. The idea of replaying data was proposed very early, but it is still widely used and performs very well in practical applications. The last family are parameter isolation that isolates parameters from different tasks explicitly \cite{mallya2018packnet} and draws more on the idea of transfer learning. All these methods follow the task-based setting, that is, the clear task boundaries are utilized to help store or process data or models from different previous tasks. 

However, task boundaries are often not available in real-world tasks in streaming scenarios so that the above methods are no longer applicable.
\cite{aljundi2019task} and \cite{aljundi2019online} both propose methods that extend continual learning to online setting on streaming data, which are based on regularization and replay respectively. But they only work on independent data like images but cannot deal with streaming data on complex structures like networks.
\section{Preliminaries}
\label{sec:3}
In this section, we introduce the basic idea of GNNs and the problem definition of streaming GNNs, and then give a few simple but insufficient algorithms, which will be used as comparison methods in our experiments.

Let $G=(\mathbf{A}, \mathbf{X})$ be an attributed graph where $\mathbf{A}_{n \times n} \in \{0,1\}$ is the adjacency matrix and $\mathbf{X}_{n \times d} \in [0,1]$ is the node features. Let $V = \{v_i\}_{i=1}^n$ represent the set of nodes and $E = \{e_{ij}\}, i,j \in V$ represents the set of edges. 

\textbf{Graph Neural Networks (GNNs)} are a general architecture for neural networks on graphs to generate node representations by an aggregator for neighborhood information. Here, we follow the established work GraphSAGE \cite{hamilton2017inductive} where the layer $l$-th is defined as:
\begin{equation}
    \mathbf{h}_v^l = \sigma( \mathbf{W}^l \cdot MEAN(\{\mathbf{h}_v^{l-1}\} \cup \{\mathbf{h}_u^{l-1}, \forall u \in \mathcal{N}(v)\}) ),
\end{equation}
where $\mathcal{N}(v)$ specifies neighborhood of node $v$ and $\sigma$ is the activation function. We then use node representations in cross-entropy loss for node classification as
\begin{equation}
    L(\mathbf{\theta}; \mathbf{A}, \mathbf{X}) =  \frac{1}{n} \sum_{v \in V_{label}} l(\mathbf{\theta}; v) = \frac{1}{n} \sum_{v \in V_{label}} l(softmax(\mathbf{h}_v^L), y_v),
\end{equation}
where $V_{label} \subset V$ is a subset of labeled nodes, $y_v$ is the label of node $v$ and $L$ is the number of layers of the GNN.

A \textbf{Streaming Network} is represented as $G = (G^1, G^2, ..., G^T)$, where $G^t = G^{t-1} + \Delta G^t$. 
$G^t = (\mathbf{A}^t, \mathbf{X}^t)$ is an attributed graph at time $t$ and  
$\Delta G^t = (\Delta \mathbf{A}^t, \Delta \mathbf{X}^t)$ is the changes of node attributes and network structures at $t$. Notice that the changes contain newly added nodes, which are difficult to handle for many existing dynamic network representation learning methods.

\textbf{Streaming Graph Neural Networks (Streaming GNNs)} are extensions of traditional GNNs in a streaming setting. Given a streaming network $G$, the goal is to learn $(\mathbf{\theta}^1, \mathbf{\theta}^2, ..., \mathbf{\theta}^T)$ where $\mathbf{\theta}^t$ is the parameters of the GNN at $t$ that can generate satisfactory $h_v^L$ representation for any node $v \in G^t$. 

In the streaming graph data, the network is continuously evolving, resulting in changes in the patterns, referring to the neighborhood information of the node captured by GNNs, including network structures and node attributes. The changes of these patterns result in that node representations obtained through GNNs need to be updated. We discuss three simple schemes as follows.
\begin{itemize}
    \item \textit{Pretrained GNN}: As GNNs are inductive models, a straightforward idea to generate representations for the previously unseen nodes and changed nodes on $G^t$ is to use a pre-trained GNN model. But if the patterns of these nodes are different from knowledge in the pre-trained model, then good representations cannot be obtained.
    \item \textit{Retrained GNN}: Another naive solution is to train a new GNN on the whole graph data over again. It may has high performance but suffers from affordable time and space cost if the network becomes large as time goes by.
    \item \textit{Online GNN}: A better solution is to train on $\Delta G^t$ using $\theta^{t-1}$ to initialize, which is called online learning. But if the patterns in $\Delta G^t$ are different from $\theta^{t-1}$, catastrophic forgetting will occur. Knowledge of $\theta^{t-1}$ may be abruptly lost and degraded representations of nodes in $G^{t-1}$ will be obtained.
\end{itemize}

It can be seen that the above schemes all have great limitations when dealing with large-scale streaming graph data whose pattern distribution shifts over time. So in the next section, we propose our streaming GNN model to avoid all the deficiencies of the above schemes.

\section{Streaming GNNs via continual learning}

In this section, we first introduce the basic framework of our model, called \textbf{ContinualGNN}, which can learn new patterns that appear in the network and consolidate existing patterns simultaneously. Next, we propose an efficient approximation algorithm to detect the emergence of new patterns in streaming networks, and a multi-view based method to consolidate existing patterns.

\subsection{Model Framework}

Due to the limitations of all naive methods as discussed in Section \ref{sec:3}, 
we propose streaming GNNs via continual learning that has been proved to be effective and memory efficient to mitigate catastrophic forgetting. That is to say, through continual learning, our model can maintain the knowledge of previous tasks and current tasks at the same time in a streaming scenario. 
In the followings, we give some concepts corresponding to continual learning in the context of streaming networks, and then on this basis, we present the general goal of our model.

\begin{itemize}
\item \textit{Current Tasks $D^t$}: A current task is defined as training a GNN with parameters $\theta^t$ on the new network $\Delta G^t$. But as nodes and edges are not independent in networks, $\Delta G^t$ may influence the neighborhood of other nodes. It is more proper to define the current task as affected parts of the network.
\item \textit{Previous Tasks $(D^1,...,D^{t-1})$}: Previous tasks are defined as training GNNs on the previous changes of the network $(\Delta G^1, \Delta G^2, ..., \Delta G^{t-1})$. 
If there are lots of time steps in streaming data, it will be unaffordable to preserve information for each previous task at each snapshot. So we replace streaming data with an integrated network $G^{t-1}$ so that only a single task will be considered for knowledge consolidation. 
\end{itemize}

Furthermore, following \cite{kirkpatrick2017overcoming}, we explain our goal from a probabilistic perspective. Since we want to learn a GNN parameterized by $\theta$ on $G^t = G^{t-1} + \Delta G^t$ with a supervised loss, we use $\Delta G^t$ and $G^t$ to represent the nodes and the corresponding labels to simplify formulas and then the conditional probability can be computed by using Bayes' rule:
\begin{equation}
\begin{aligned}
    \log p(\theta|G^{t-1}, \Delta G^t) = \log p(\Delta G^t|\theta) + \log p(\theta|G^{t-1}) - \log p(\Delta G^t),
\end{aligned}
\end{equation}
where the first term $\log p(\Delta G^t|\theta)$ is the log likelihood for the task on $\Delta G^t$. The second term $\log p(\theta|G^{t-1})$ is the posterior distribution that absorbs information learned from the task on $G^{t-1}$. The last term $\log p(\Delta G^t)$ is a constant.
Then we have the general loss function of our ContinualGNN in streaming network at time $t$ as
\begin{equation}
\begin{aligned}
L = L_{new} + L_{existing}
\end{aligned}
\label{equ:loss}
\end{equation}
where the first term $L_{new}$ is the loss function on the influenced parts of networks. The second term $L_{existing}$ aims to consolidate patterns on previous network data. The following sections discuss how to design these two loss functions separately.

\subsection{Detection for New Patterns}
\label{sec:detection}

In a streaming scenario, network changes lead to the emergence of new patterns that was not captured by the previous GNN. The GNN need to be updated, which corresponds to the current task in continual learning. But it is insufficient to consider that the current task is to train the nodes in $\Delta G^t$. On the one hand, these nodes may not contain new patterns if the changes are small. For example, when the neighborhood of a node is very stable, adding an edge will not have a great impact on its representation and retraining these nodes is a waste of time. On the other hand, new patterns may still appear in existing nodes. Considering that the network data is not independent of each other, some changes may affect other parts of the network. Any node may have new pattern during the network change, and these nodes also need to be retrained. So we should study how to mine the nodes that may have new patterns.

First of all, in order to solve the above problem, we give the definition of new patterns related to the influenced degree of nodes. If nodes are greatly affected, they may contain new patterns. The influenced degree of a node is defined by a \textit{scoring function} which is the delta of representations obtained from the GNN between two adjacent snapshots. Then the set of influenced nodes, corresponding to the new patterns, is generated according to the scores as
\begin{equation}
\begin{aligned}
\label{equ:influenced_node_set}
    \mathcal{I}(\Delta G^t) = \{ u | \|\Delta \mathbf{h}_u^{t, L}\| = \|\mathbf{h}_u^{t + \Delta t, L} - \mathbf{h}_u^{t, L}\| > \delta \},
\end{aligned}
\end{equation}
where $\delta$ is a threshold value and a hyper-parameter. A smaller $\delta$ allows more nodes to be treated as new patterns, and vice versa. 
However, it is time-consuming to calculate the scoring function for all nodes at each time step. In fact, this is not necessary because only the neighbors within the order $L$ are affected. An improved idea is to use breadth-first-search (BFS) to find all the $L$-order neighbors and calculate their scoring function, but the complexity is still related to the size of the neighborhood of $\Delta G^t$. 

Secondly, we design a fast approximation algorithm for the scoring function based on the information propagation of the changes, and the time complexity is only related to $\Delta G^t$. In the following, we first present a lemma, and then based on the lemma, give and analyze our approximate algorithm.

\begin{lemma}
    \label{lem:1}
    When \textit{attributes} of node $v_i$ are changed from $\mathbf{x}_i$ into $\mathbf{x}_i + \Delta \mathbf{x}_i$, the score of node $v_u$ can be calculated as:
\begin{equation}
\begin{aligned}
\label{equ:h}
    \Delta \mathbf{h}_u^{t,L}  = f^{t, L}_u (\Delta \mathbf{x}_i \mathbf{\tilde{W}}),
\end{aligned}
\end{equation}
    where $\mathbf{\tilde{W}} = \prod_{l=1}^L \mathbf{W}^l$ and $f^{t,l}_u$ is a function calculated as
\begin{equation}
\begin{aligned}
\label{equ:f}
    & f^{t,l}_u = \frac{1}{d_u^t}(\sum_{u' \in N^t(u)} f^{t, l-1}_{u'}), l > 0\\
    & f^{t, 0}_i = 1, f^{t, 0}_u = 0, u \ne i
\end{aligned}
\end{equation}
where $d_u^t$ and $N^t(u)$ are the degree and neighbors of $v_u$ at time $t$.
\end{lemma}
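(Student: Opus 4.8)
The plan is to reduce the statement to a linear recurrence that can be resolved by induction on the layer index $l$. The central approximation — which is what turns this into a fast \emph{scoring} heuristic rather than an exact identity — is to linearize the aggregation by treating the activation $\sigma$ as (approximately) the identity, so that GraphSAGE becomes a linear map of the input features; equivalently one keeps only the first-order term of a Taylor expansion of $\sigma$. Writing feature vectors as rows so that weights act on the right (the convention under which the statement's right-multiplication $\Delta \mathbf{x}_i \mathbf{\tilde{W}}$ is consistent), the linearized update reads $\mathbf{h}_u^l = \frac{1}{d_u^t}\big(\sum_{u' \in N^t(u)} \mathbf{h}_{u'}^{l-1}\big)\mathbf{W}^l$, and because this map is linear the perturbation $\Delta \mathbf{h}_u^l$ obeys the identical recurrence with $\mathbf{h}$ replaced by $\Delta \mathbf{h}$.

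First I would fix the base case. Since only the attribute of $v_i$ is modified and $\mathbf{h}_v^0 = \mathbf{x}_v$, the input-layer perturbation is $\Delta \mathbf{h}_i^0 = \Delta \mathbf{x}_i$ and $\Delta \mathbf{h}_u^0 = \mathbf{0}$ for every $u \ne i$. This matches the initialization $f^{t,0}_i = 1$, $f^{t,0}_u = 0$ for $u \ne i$, in the sense that $\Delta \mathbf{h}_u^0 = f^{t,0}_u\, \Delta \mathbf{x}_i$ (with the empty weight product read as the identity). Next I would run the induction on $l$, proving the invariant $\Delta \mathbf{h}_u^l = f^{t,l}_u\, \Delta \mathbf{x}_i \prod_{k=1}^{l}\mathbf{W}^k$. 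Inserting the inductive hypothesis into the linearized recurrence and pulling the scalar factors out of the sum gives
\begin{equation}
\begin{aligned}
\Delta \mathbf{h}_u^l
&= \frac{1}{d_u^t}\Big(\sum_{u' \in N^t(u)} \Delta \mathbf{h}_{u'}^{l-1}\Big)\mathbf{W}^l \\
&= \Big(\frac{1}{d_u^t}\sum_{u' \in N^t(u)} f^{t,l-1}_{u'}\Big)\, \Delta \mathbf{x}_i \prod_{k=1}^{l}\mathbf{W}^k,
\end{aligned}
\end{equation}
and the parenthesized factor is exactly the recurrence defining $f^{t,l}_u$. Evaluating at $l = L$ and writing $\prod_{k=1}^{L}\mathbf{W}^k = \mathbf{\tilde{W}}$ yields $\Delta \mathbf{h}_u^{t,L} = f^{t,L}_u(\Delta \mathbf{x}_i \mathbf{\tilde{W}})$, as claimed.

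The point I would be most careful about is that $N^t(u)$ and $d_u^t$ are held fixed throughout. This is legitimate precisely because the lemma concerns an \emph{attribute} change only: the adjacency is unchanged, so the averaging coefficients $1/d_u^t$ are identical before and after the perturbation, which is what lets the difference $\Delta \mathbf{h}$ satisfy the very same recurrence as $\mathbf{h}$. The main genuine obstacle is justifying the linearization of $\sigma$: the factorization is exact for a linear network but only approximate once $\sigma$ is nonlinear, so I would either state the lemma under the linear-GNN assumption or control the discrepancy through the first-order Taylor remainder. A secondary bookkeeping issue is the left-versus-right matrix-multiplication convention between the GraphSAGE update (weights written on the left) and the statement (weights on the right of $\Delta \mathbf{x}_i$); I would fix the row-vector convention at the outset so that the accumulated product assembles into $\mathbf{\tilde{W}} = \prod_{l=1}^{L}\mathbf{W}^l$ in exactly the stated order.
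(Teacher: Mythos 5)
Your proposal is correct and takes essentially the same route as the paper's proof: both replace the nonlinear activation with a linear (surrogate) map so the layer weights collapse into $\mathbf{\tilde{W}}$, and then unroll the resulting linear recurrence for $\Delta \mathbf{h}_u^{t,l}$ down to the single perturbed input $\Delta \mathbf{x}_i$. The paper compresses the induction into a single ``$= \dots =$'' step, whereas you spell out the base case, the inductive invariant, and the fixed-adjacency caveat explicitly --- a faithful filling-in of the same argument rather than a different one.
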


\begin{proof}
    Following \cite{zugner2018adversarial}, we can use a surrogate model that still captures the idea of convolutions to accelerate the computation of scoring function. The non-linear activation function is replaced with a linear function so that parameters $\mathbf{W}^l$ can be absorbed into a matrix $\mathbf{\tilde{W}}$ that equals to the production of $\mathbf{W}^l$ at each layer. Then, $\Delta \mathbf{h}_u^{t,L}$ can be calculated as
    \begin{equation}
    \begin{aligned}
    \Delta \mathbf{h}_u^{t,L} & = \mathbf{h}_u^{t + \Delta t, L} - \mathbf{h}_u^{t, L} \\
    & = \frac{1}{d^t_u}(\sum_{u' \in N^t(u)} \Delta \mathbf{h}_{u'}^{t, L-1}) \mathbf{W}^l \\
    & = ... = f^{t, L}_u \Delta \mathbf{x}_i \mathbf{\tilde{W}},
    \end{aligned}
    \end{equation}
    where $f^{t,l}_u$ reflects the influence of node $v_i$ on $v_u$ by propagating information through the network and is only related to node degrees. Node $v_i$ will have a greater influence on $v_u$ if there are more paths from $v_i$ to $v_u$, and the degrees of other nodes on these paths are smaller.
\end{proof}

According to Lemma \ref{lem:1}, we discuss how to estimate the influenced degree of nodes efficiently from two perspectives. 
\begin{itemize}
    \item We first discuss the method of calculating $\Delta \mathbf{h}_u^{t,l}$ under the simplest network change, that is, when the attributes of a node change. Compute $\Delta \mathbf{x}_i \mathbf{\tilde{W}}$ once. Then implement BFS from node $v_i$ and compute $f^{t,l}_u$ iteratively, which is only related to degrees of nodes at time $t$. The search will iterate over $L$ rounds. Finally, after $\Delta \mathbf{x}_i \mathbf{\tilde{W}}$ and $f^{t,L}_u$ are obtained, $\Delta \mathbf{h}_u^{t,L}$ can be calculated according to Equation \ref{equ:h}.
    \item We then consider the more general and complex situation in the network, that is, when the structures related to more than one node change. In this case, we adopt an approximate idea that the structural changes are only considered once during the information propagation process of GNNs. From the implementation point of view, $\Delta \mathbf {h}_i^1$ is computed and regarded as $\Delta \mathbf {x}_i$, which is converted to the situation where the node attribute changes. Then use a similar method as before to start a BFS from the changed node set, iteratively calculate $f_{u,i}^{t,l}$ of each possible affected node $v_u$ affected by node $v_i$, and iterate $L$ rounds. In the end, the influenced degree of each node $v_u$ is quickly approximated, which is the summation of the influenced degree of each $v_i$:
    \begin{equation}
    \begin{aligned}
    \label{equ:approximation}
        \Delta \mathbf{h}_u^{t,L} \approx \sum_{v_i \in \Delta V^t} f_{u,i}^{t,L} \|\Delta \mathbf{h}_i^{t,1}\|,
    \end{aligned}
    \end{equation}
    where $\Delta V^t$ is denoted as the node set of $\Delta G^t$. 
\end{itemize}

The complexity of the original methods is $O(|V| \times \tilde{m})$ where $\tilde{m}$ is the complexity to compute $\mathbf{\tilde{W}}$, while the complexity of our approximation algorithm is reduced to $O(|\Delta V| \times \tilde{m} + |N_{\Delta V}^L| \times |\Delta V|)$, where $|N_{\Delta V}^L|$ is the number of $L$-hop neighbors of the node set $\Delta V$.

Finally, according to the above definition and approximate method, we obtain the set of influenced node $\mathcal{I}(\Delta G^t)$ efficiently and then have the objective that replays these nodes to learn new patterns for the GNN at time $t$:
\begin{equation}
    L_{new} = \sum_{v_i \in \mathcal{I}(\Delta G^t)} l(\theta;v_i).
\end{equation}

\subsection{Preservation for Existing Patterns}
\label{sec:preservation}

In a streaming network, when the GNN learns new knowledge, patterns in the historical tasks needs to be consolidated so as to avoid catastrophic forgetting. In the field of continual learning, one way to consolidate historical knowledge is to replay samples. But replaying all nodes at each snapshot is time-consuming. An alternative that a small number of nodes are saved and replayed is prone to overfitting problems. Hence, we propose a method to consolidate existing patterns of the network from two perspectives of data and model, thereby reducing the training complexity and improving the preservation of existing patterns. 

\subsubsection{Data-view}

In order to maintain existing patterns, we can save some historical graph data, including nodes and their neighbors, and then revisit these data at each step of the incremental training. Suppose that we have a memory $\mathcal{M}$ to store some historical data. 
The simple random sampling strategy often has relatively large errors and unstable results. Besides, some studies \cite{schaul2015prioritized,yin2017knowledge} have pointed out that the importance of different samples is different. The unimportant samples do not contribute much to the model convergence and may fail to consolidate the historical knowledge well. Therefore, we propose a step-wise sampling strategy. On the premise of ensuring the stable distribution of memory categories, select more important nodes to save, and revisit them during the process of training in the streaming scenario.

\begin{itemize}
    \item \textit{Hierarchical sampling on clusters}: We divide the nodes in the network into several categories, which are labels or communities of nodes. The probability of the $k$-th node being sampled is $p_k(v_i)=\frac{m_k}{n_k}=p$. 
    On the one hand, it can ensure that the distribution of different types of nodes in memory and in the entire network is consistent. On the other hand, its sampling error is smaller than simple random sampling: $\sigma_{hir}(\mathcal{M}) \le \sigma_{hir}(\mathcal{M}) + \frac{1-p}{m} \sum \frac{n_k}{n-1}(\bar{x}_k - \bar{x})^2 = \sigma_{ran}(\mathcal{M})$. In addition, it can ensure that at least one node of each type of node is stored in memory, which is more suitable for scenes with imbalanced categories in historical samples.
    \item \textit{Importance-based sampling within clusters}: In hierarchical sampling, the importance of each node in a cluster is the same, which is not reasonable enough, we further propose to sample each type of nodes according to the importance of the nodes. The importance of a node $v_i$ is designed as $\frac{1}{|\mathcal{N}_i|}\sum_{v_j \in \mathcal{N}_i} \mathbf{I}(y_i \ne y_j)$, where $\mathbf{I}$ is the indicator function. A node is important if its attributes are quite different from the attributes of its neighbors. These important nodes are more likely to locate at the class boundary and contribute more to the gradient. Revisiting them can better consolidate the historical knowledge. 
\end{itemize}

In the streaming scenario, we adopt the reservoir sampling algorithm \cite{vitter1985random} to extend the step-wise sampling strategy so as to update the memory $\mathcal{M}$ online, and the probability that node $v_i$ replaces the $k$-th node in the memory is
\begin{equation}
\label{equ:mem}
    p(v_i) = p_k(v_i) = \frac{m_k}{n_k} [1 + \alpha \cdot \frac{1}{|\mathcal{N}_i|}\sum_{v_j \in \mathcal{N}_i} \mathbf{I}(y_i \ne y_j)],
\end{equation}
where $\alpha$ is a hyper-parameter that controls the weight of importance-based sampling strategy. Then, given the memory $\mathcal{M}$, we get the optimization goal of consolidating historical knowledge from the perspective of data replaying as
\begin{equation}
    L_{data} = \sum_{v_i \in \mathcal{M}} l(\theta;v_i).
\end{equation}

\subsubsection{Model-view}

Only saving a part of historical data, and replaying these data, because this part of the data is relatively small, is prone to the overfitting problem. With the help of saving the model, the historical model is used to constrain the current model, so as to alleviate the phenomenon of overfitting, and further improve the generalization of knowledge preservation. Using L2 regularization constraints directly on the model parameters is an overly rough method and may make the results worse since it will lead to over-smoothing or invalid constraints. 
So, following Elastic weight consolidation (EWC) \cite{kirkpatrick2017overcoming}, a regularization-based method in continual learning, we model the posterior distribution $p(\theta|G^{t-1})$ and derive that it is equivalent to imposing weighted regularization on parameters of the GNN model. 

As $p(\theta|G^{t-1})$ is intractable, the Laplace approximation is adopted to approximate the posterior. Let $q(\theta) = p(\theta|G^{t-1})$ and assume the posterior of GNN parameters to follow a Gaussian distribution with mean given by the parameters $\theta^{t-1}$. Since $\log q(\theta)$ is a quadratic function, we carry out Taylor Expansion on $\theta^{t-1}$ and obtain
\begin{equation}
    \log q(\theta) = -\frac{\mathbf{F}}{2} (\theta - \theta^{t-1})^2.
\end{equation}

So we have $p(\theta|G^{t-1}) \sim N(\theta^{t-1}, 1/\mathbf{F})$, where $\mathbf{F}$ is the Fisher Information matrix and can be computed from first-order derivatives. However, calculating $\mathbf{F}$ at each step requires traversing the entire network, and the cost of storage and calculation is high. Fortunately, we use the nodes and their neighbors in memory $\mathcal{M}$ mentioned above to estimate $\mathbf{F}$ of GNN parameters as
\begin{equation}
\begin{aligned}
\label{equ:fisher}
\mathbf{F} & = E_{v}[(\frac{\partial \log p(\theta;v)}{\partial \theta})(\frac{\partial \log p(\theta;v)}{\partial \theta})^T] \\
    & = \frac{1}{n} \sum_{v \in G^{t-1}}[g(\theta;v) g(\theta;v)^T] \\
    & \approx \frac{1}{m} \sum_{v \in M}[g(\theta;v)g(\theta;v)^T].
\end{aligned}
\end{equation}

By approximating the posterior, we derive the objective to preserve historical information from model view:
\begin{equation}
    L_{model} = \lambda \sum_i \mathbf{F}_i (\theta_i - \theta_i^{t-1})^2,
\end{equation}
which can be understood as a weighted smoothing term of GNN parameters. $\lambda$ sets how important the previous information is compared to the new network. $\mathbf{F}_i$ indicates the importance of the $i$-th parameter and $\theta_i^{t-1}$ is its optimal value at the last time step. 
On the one hand, the regular term guarantees that the distance between the current model parameters and the historical model parameters will not deviate further. On the other hand, compared with the ordinary L2 smoothing of the parameters, the key of the objective is to add the importance of different parameters, so that the changes of GNN parameters that are important to the past network are small, which guarantees the consolidation of historical information, and unimportant parameters can be updated more drastically.

Combining the above two perspectives, we obtain the optimization goal of consolidating historical knowledge as 
\begin{equation}
    L_{existing} = L_{data} + L_{model}.
\end{equation}

\subsection{Model Optimization}

Combining the above objectives, we obtain the optimization goal of our model at each time step $t$ as
\begin{equation}
\begin{aligned}
\label{equ:loss}
    L & = L_{new} + L_{data} + L_{model} \\
    & = \sum_{v_i \in \mathcal{M} \cup \mathcal{I}(\Delta G^t)} l(\theta;v_i) + \lambda \sum_i F_i (\theta_i - \theta_i^{t-1})^2.
\end{aligned}
\end{equation}

We use Stochastic Gradient Descent (SGD) to optimize the objective so as to obtain the parameters $\theta^t$ of the GNN at each time step $t$. We summarize our algorithm in Algorithm \ref{alg:gnn}. 
Comparing with the time complexity at each time of retraining relative to $O(|V^t|)$, the complexity to minimize Equation \ref{equ:loss} depends on the learning of the changed parts and the consolidation of existing knowledge in networks. The learning of the changing part is related to the number of affected nodes, that is $O(|\mathcal{I}(\Delta G^t)|)$. And for the consolidation of existing knowledge, its complexity $O(m)$ is irrelevant to the size of streaming networks.

\begin{algorithm}
	\caption{Learning Algorithm of ContinualGNN at time $t$}
	\label{alg:gnn}
	\begin{algorithmic}[1]
		\Require Network snapshot at time $t$:  $G^t = \{V^t, E^t\}$, GNN parameterized by $\theta^{t-1}$ learned on $G^{t-1}$
		\Ensure GNN parameterized by $\theta^{t}$ learned on $G^{t}$
        \State Approximate scoring function and obtain influenced node set $\mathcal{I}(\Delta G^t)$ according to Equation \ref{equ:influenced_node_set} and Equation \ref{equ:approximation}
        \State Load important nodes from memory $\mathcal{M}$
        \State Calculate parameter importance $F$ according to Equation \ref{equ:fisher}
        \For{$e=1$ to $num\_epoches$}
            \State Calculate loss function according to Equation \ref{equ:loss} 
            \State Update parameters using SGD
        \EndFor
        \State Update memory $\mathcal{M}$ using influenced node set $\mathcal{I}(\Delta G^t)$ according to Equation \ref{equ:mem}
	\end{algorithmic}
\end{algorithm}
\section{Experiments}
In this section, we evaluate the effectiveness and efficiency of our model from multiple perspectives. Firstly, we perform node classification on four data sets to prove that our model can efficiently implement incremental learning in streaming networks. Secondly, we conduct a case study on a synthetic data to show that our model can handle catastrophic forgetting. Finally, we conduct a detailed analysis of each part of our model.

\subsection{Experimental Setup}

\subsubsection{Datasets}
We conduct experiments on three real-world data sets and a synthetic data set.
\begin{itemize}
    \item \textit{Cora} \cite{mccallum2000automating} is a static citation network with 2708 nodes, 5429 edges, 1433 features and 7 labels. A node in the network is a paper and an edge represents a paper cited or was cited by another paper. Labels represent research fields. A synthetic dynamic network is generated, which consists of 14 time steps whose label distributions vary in different snapshots. 
    \item \textit{Elliptic} \cite{weber2019anti} is a bitcoin transaction network and we extract a sub-network with 31448 nodes, 34230 edges, 166 features, 2 labels and 9 time steps. A node is a transaction and an edge is viewed as a flow of bitcoins between two transactions. A node label indicates whether it is licit. The data has 9 time steps, each of whom is spaced with an interval of two weeks.
    \item \textit{DBLP} \cite{tang2008arnetminer} is a dynamic citation network and we extract a sub-network with 20000 nodes, 75706 edges, 128 features, 9 labels and 24 time steps ranging from 1993 to 2016.
    \item \textit{Synthetic} consists of 3072 nodes, 14788 edges, 64 features, 2 labels and 24 time steps. Different network generation algorithms and different node attribute generation algorithms are utilized to simulate streaming graph data with different patterns over time. 
    We use the ER graph generator \cite{gilbert1959random} and the community graph generator \cite{fortunato2010community}, and generate nodes attributes with different normal distributions as examples. Morw details of simulation are introduced in Section \ref{sec:exp_synthetic}.
\end{itemize}

\begin{table*}[htbp]
\small
\centering
\setlength{\tabcolsep}{4pt}
\begin{tabular}{c|c|ccc|ccc|ccc|ccc}
    \toprule
    \multicolumn{2}{c|}{Dataset} & \multicolumn{3}{c|}{Cora} & \multicolumn{3}{c|}{Elliptic} & \multicolumn{3}{c|}{DBLP} & \multicolumn{3}{c}{Synthetic} \\ 
    \midrule
    \multicolumn{2}{c|}{Metric} & F1 & Accuracy & Time & F1 & Accuracy & Time & F1 & Accuracy & Time & F1 & Accuracy & Time \\
    \midrule
    \multirow{2}{*}{SkipGram} & LINE & 0.3940 & 0.5292 	& 0.1510 & 0.5608 & 0.7366 & 5.3147 & 0.2234 & 0.3293 & 27.487 &	0.6804 	&	0.6829 	& 0.2715 \\
    & DNE & 0.3519 & 0.4947	&	0.0750 	&	0.5385 	&	0.7338 &	0.7772 	&	0.1979  &	0.3138 	& 0.8217 &	0.5420 	&	0.5454 	& 0.0171  \\
    \midrule 
    \multirow{3}{*}{\shortstack{GNNs\\(Retrained)}} & GraphSAGE & \textbf{0.6692}	&	\textbf{0.8443}	&	0.2847 	&	\textbf{0.9273}	&	\textbf{0.9411}	&	0.1920 &	0.6415 	&	\textbf{0.6699}	&	2.6969  &	0.7536 	&	0.7569 	&	0.2756  \\
    & GCN & 0.6554 	&	0.8266 	&	0.2803 	&	0.9246 	&	0.9229	&	0.1816 	&	\textbf{0.6425} 	&	0.6683 	&	2.7628 &	\textbf{0.7536} 	&	\textbf{0.7575} 	&	0.2559  \\
    & EvolveGCN & 0.4416 	&	0.6014 	&	2.7608 	&	0.8449 	&	0.8793 	&	1.7532 	&	0.5679 	&	0.5871 	&	6.6777 &	0.6297 	&	0.6347 	&	1.7725 \\
    \midrule
    \multirow{4}{*}{\shortstack{GNNs\\(Incremental)}} & PretrainedGNN & 0.2812	&	0.5362	&	0.0000	&	0.7338 	&	0.8661	&	0.0000	&	0.1741 	&	0.3512 	&	0.0000 &	0.5517 	&	0.5861 	&	0.0000 \\
    & SingleGNN & 0.2507	&	0.4186	&	0.0281 	&	0.7479	&	0.7754	&	0.0267	&	0.3038	&	0.4499 	&	0.5408 &	0.6445 	&	0.6498 	&	0.0375 \\
    & OnlineGNN & 0.6018	&	0.7724	&	0.0290 	&	0.7631	&	0.7911	&	0.0286 &	0.6171	&	0.6467 &	0.5304 &	0.6592 	&	0.6644 	&	0.0418 \\
    \cmidrule{2-14}
    & ContinualGNN & \textbf{0.6496}	&	\textbf{0.8245}	&	0.0366	&	\textbf{0.9035}	&	\textbf{0.9212}	&	0.0641	&	\textbf{0.6294} 	&	\textbf{0.6685} 	&	0.4952 &	\textbf{0.7281} 	&	\textbf{0.7337} 	&	0.0884 	\\
    \bottomrule
\end{tabular}
\caption{Averaged F1, Accuracy and Running Time per Epoch for Node Classification}
\label{tab:nc}
\end{table*}

\begin{figure*}[!h]
\centering
\subfigure[Cora]{
    \includegraphics[width=1\columnwidth]{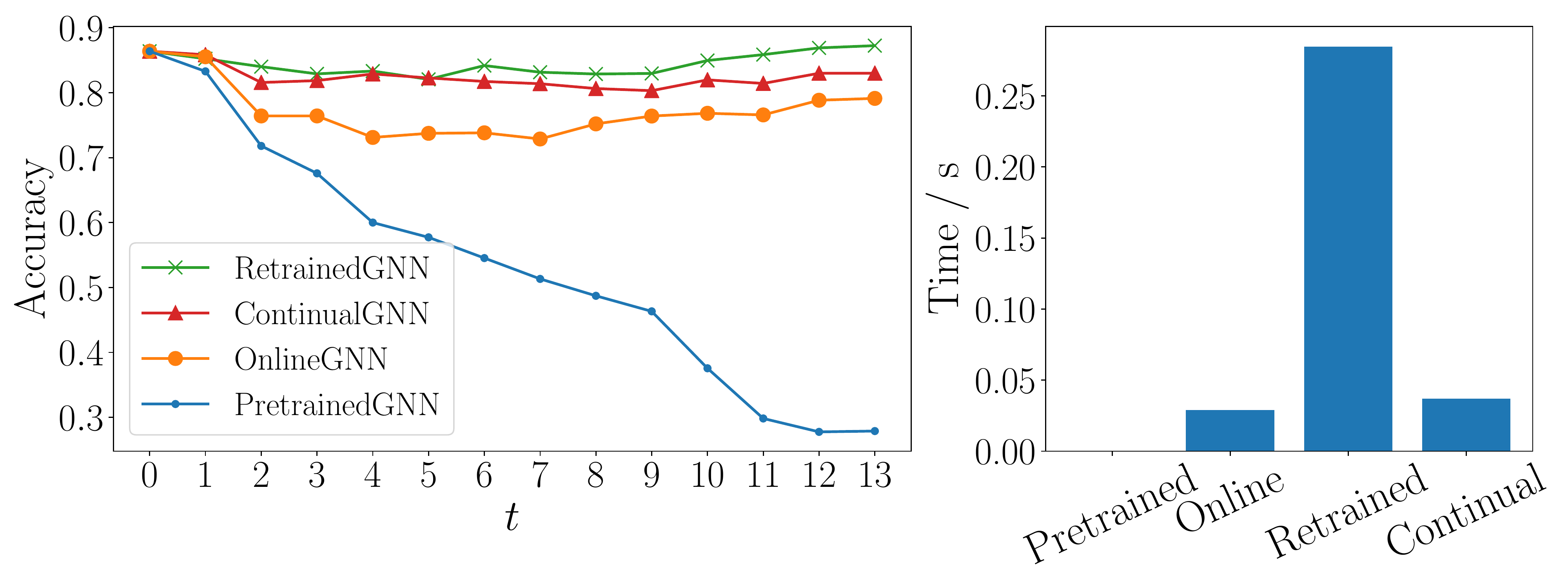}}
\hspace{0.2cm}
\subfigure[Elliptic]{
    \includegraphics[width=1\columnwidth]{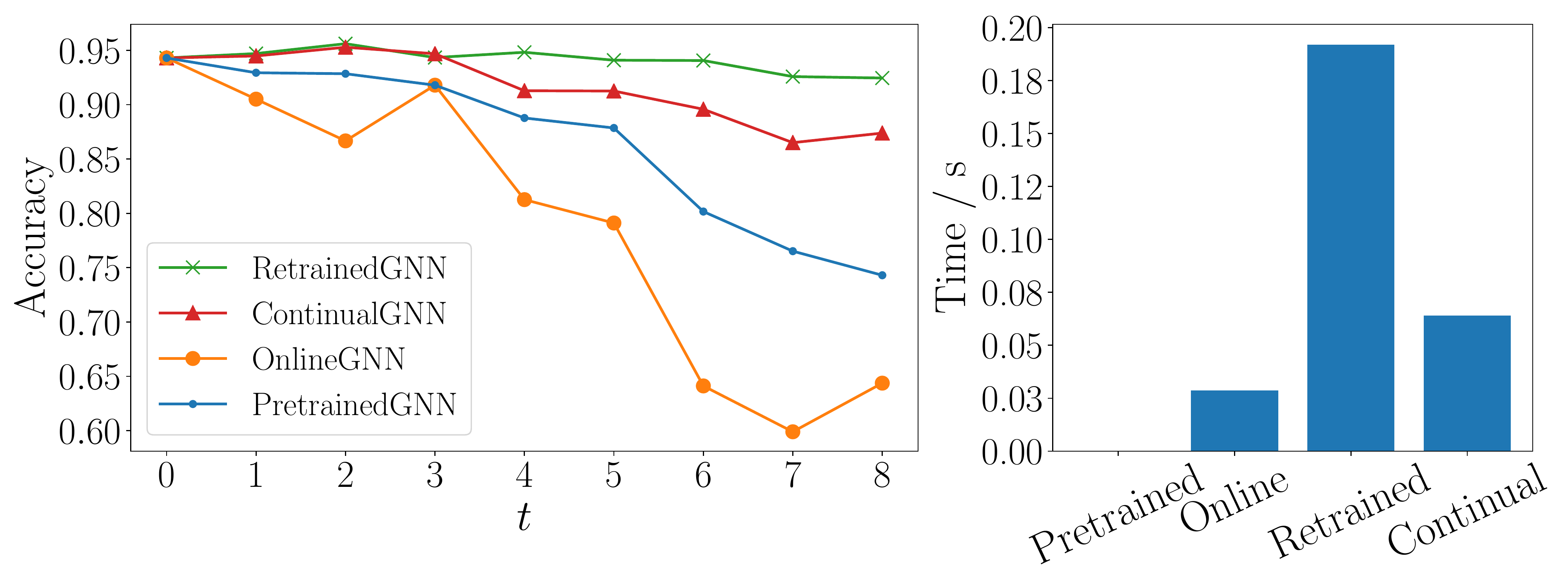}}
\subfigure[DBLP]{
    \includegraphics[width=1\columnwidth]{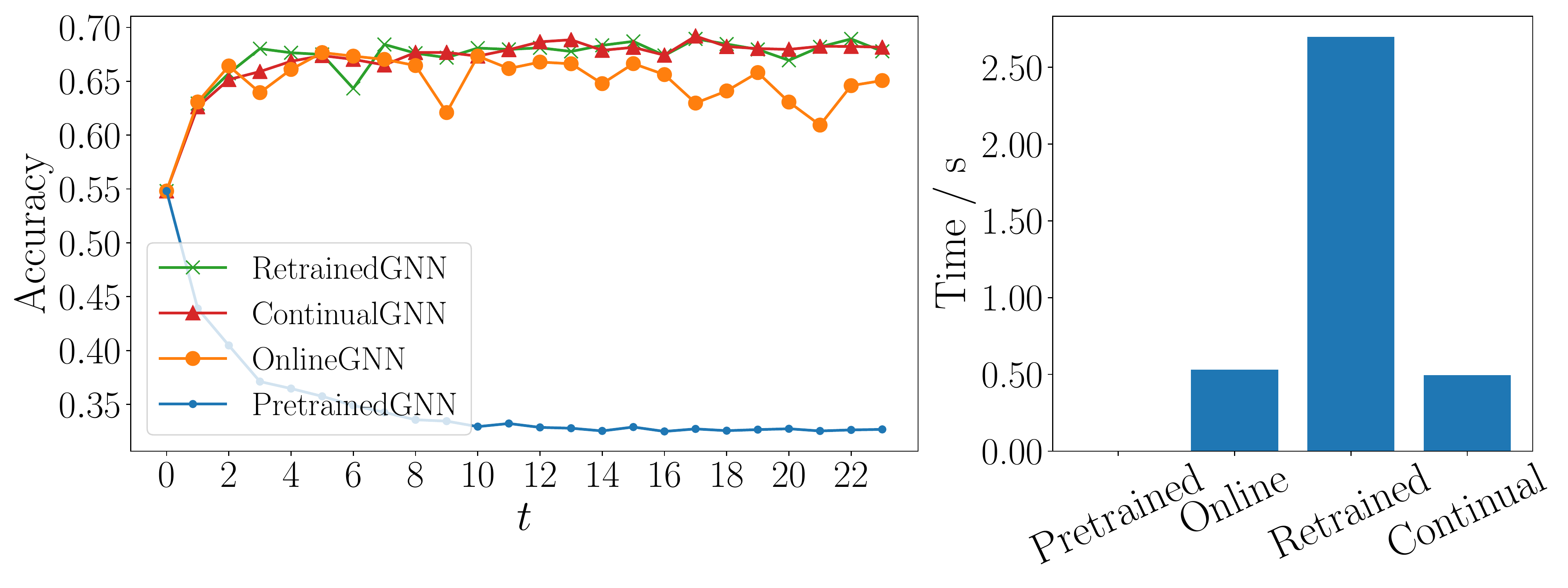}}
\hspace{0.2cm}
\subfigure[Synthetic]{
    \includegraphics[width=1\columnwidth]{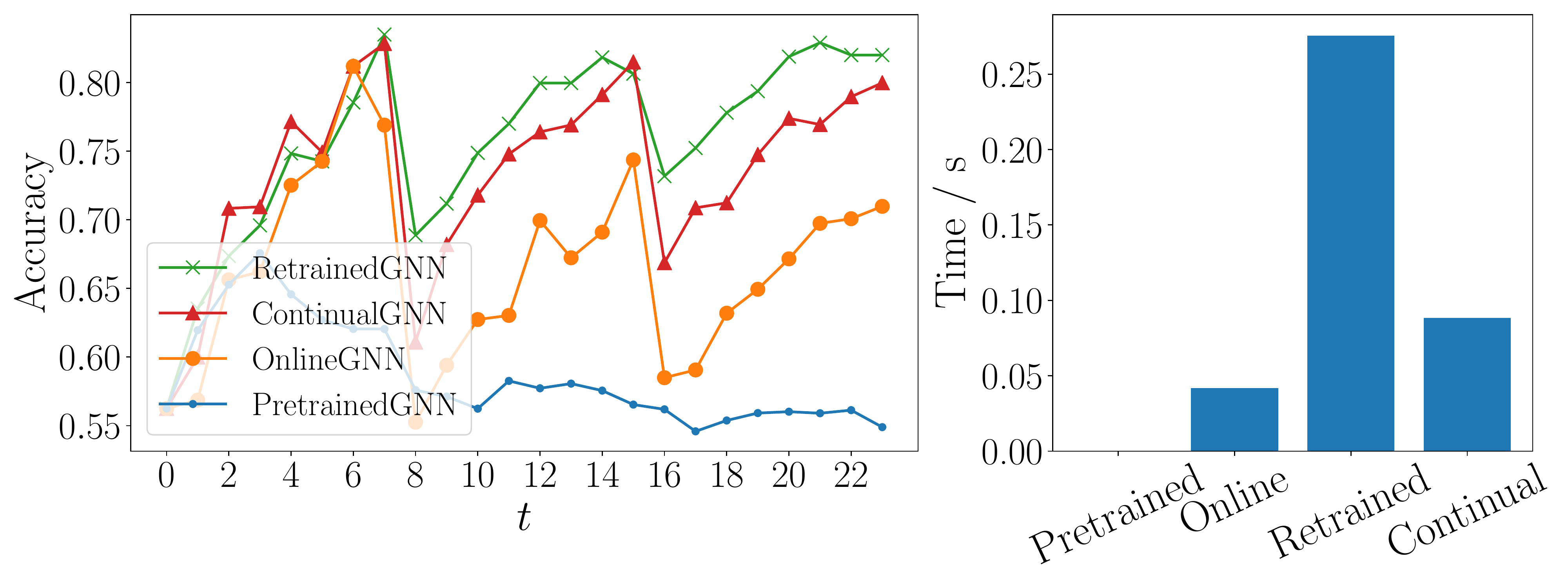}}
\caption{Node Classification on Consecutive Snapshots}
\label{fig:nc}
\end{figure*}

\begin{figure*}[!htbp]
\centering
\begin{minipage}[b]{1\linewidth}
    \centering
    \subfigure[Node classification results of nodes in $V^0$]{
        \label{fig:acc_v0}
        \includegraphics[width=0.29\columnwidth]{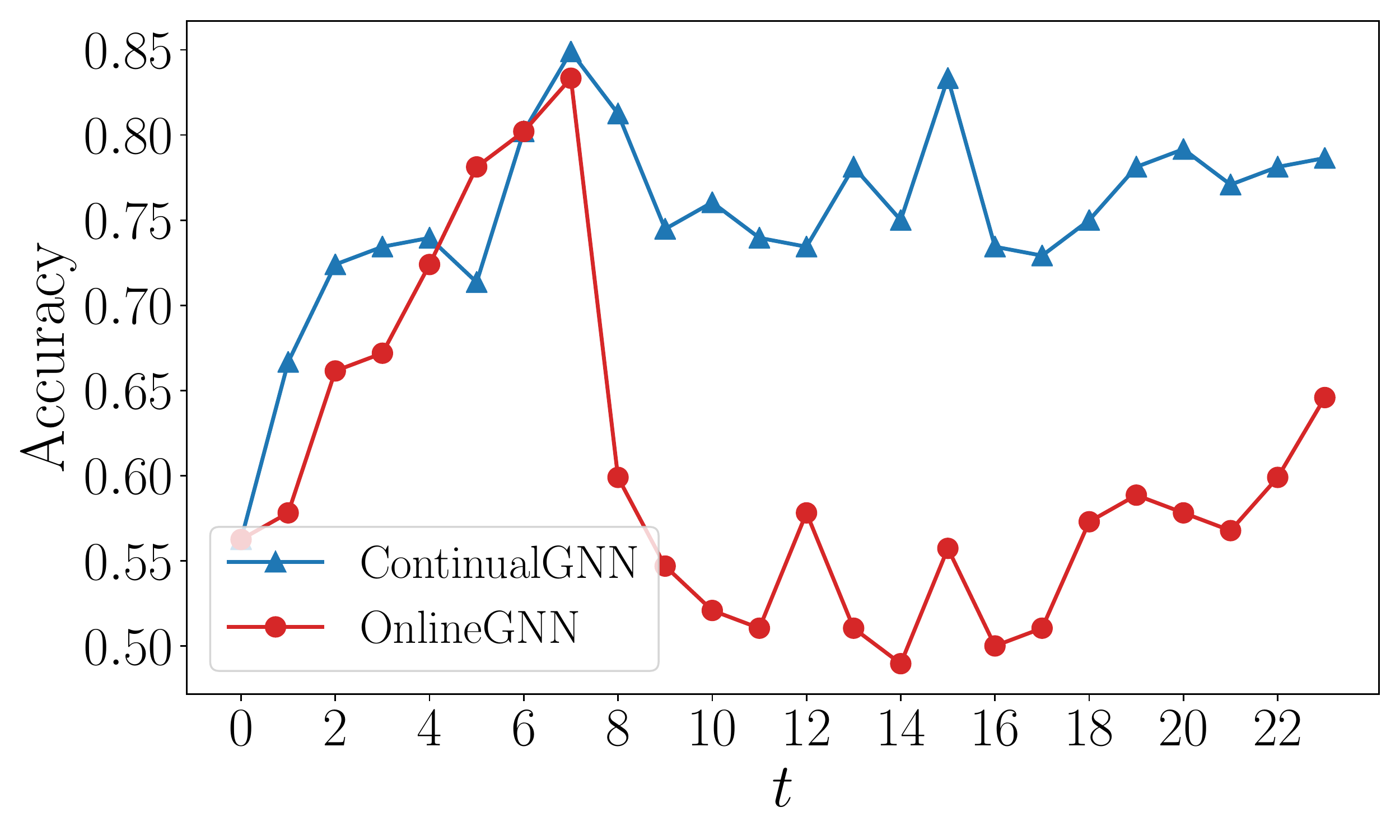}}
    \subfigure[Visualization of nodes in $V^0$ at time $t_7$ and $t_8$]{
        \label{fig:vis_v0}
        \includegraphics[width=0.66\columnwidth]{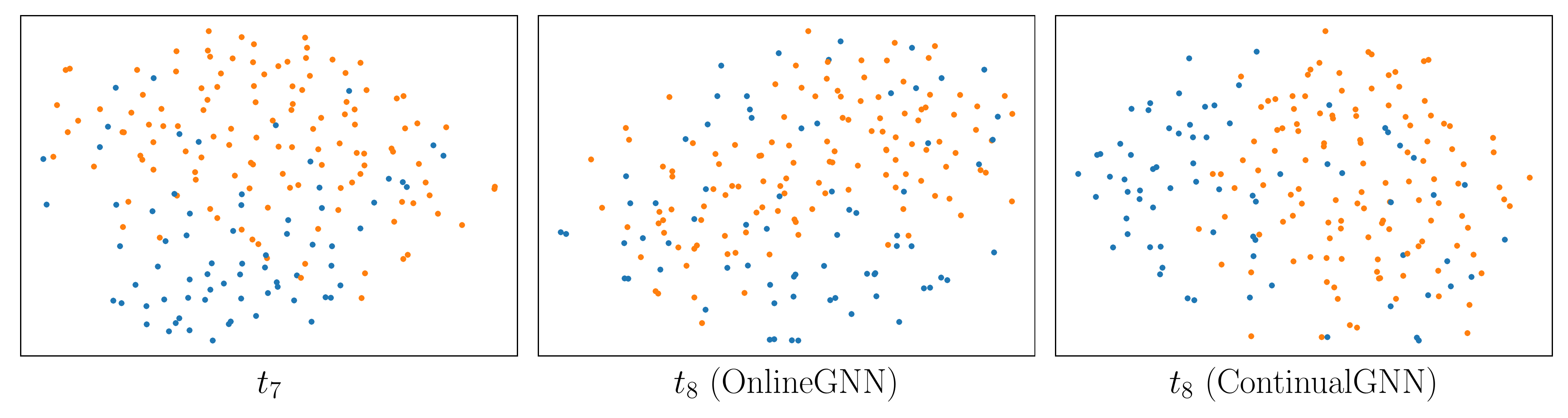}}

    \subfigure[Node classification results of nodes in $V^8$]{
        \label{fig:acc_v8}
        \includegraphics[width=0.29\columnwidth]{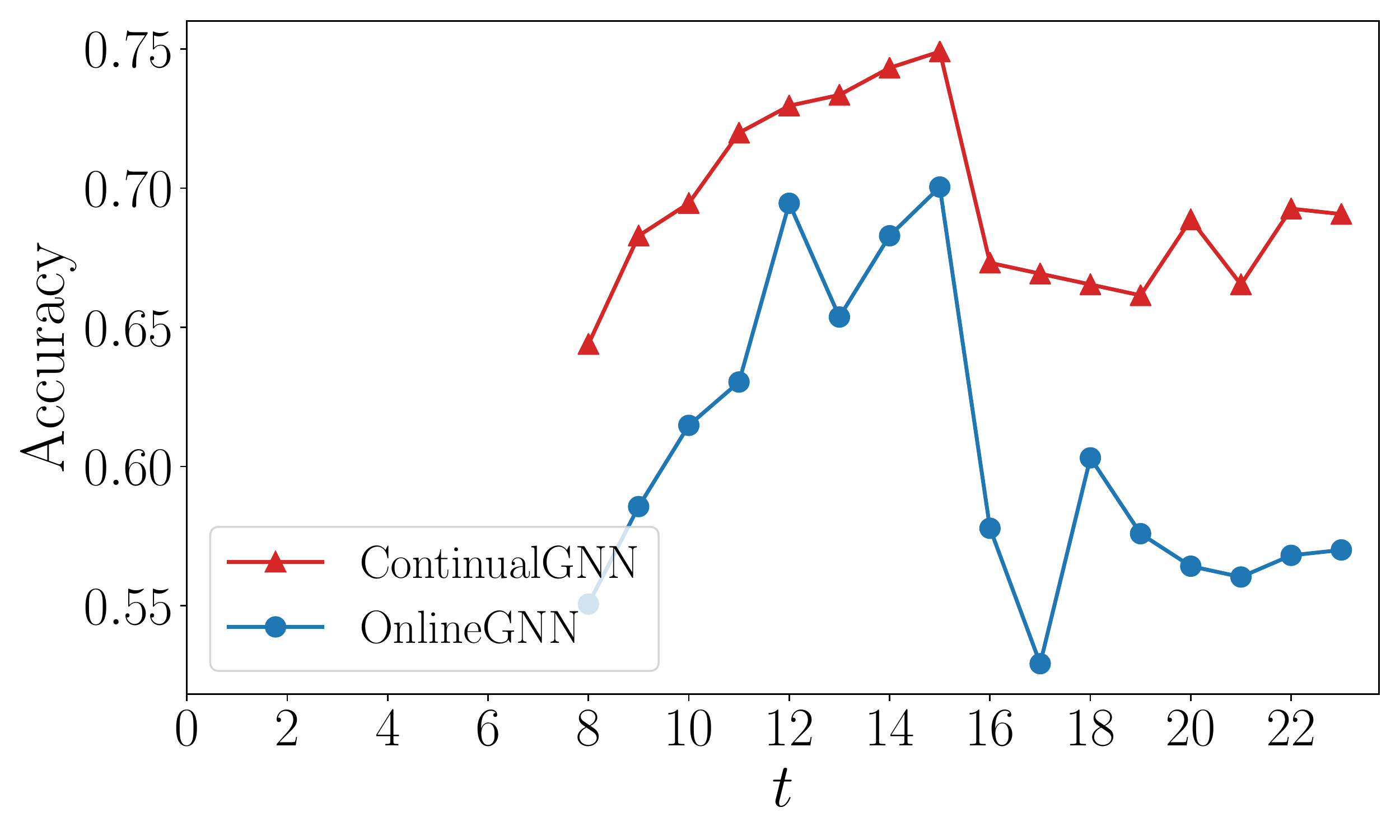}}
    \subfigure[Visualization of nodes in $V^8$ at time $t_{15}$ and $t_{16}$]{
        \label{fig:vis_v8}
        \includegraphics[width=0.66\columnwidth]{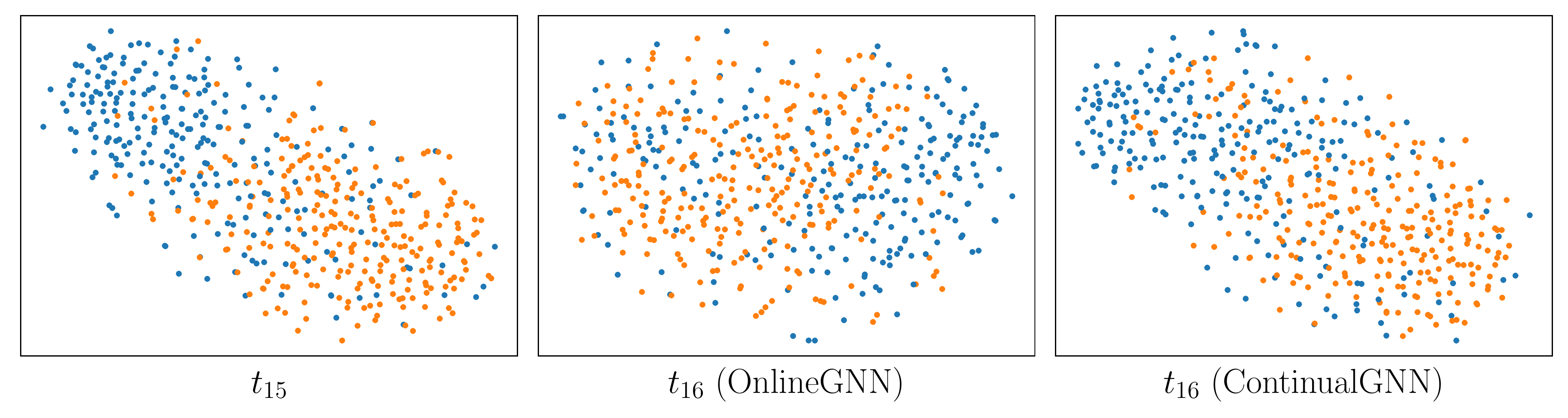}}
    \caption{Case Study on Synthetic Data}
\end{minipage}
\end{figure*}

\subsubsection{Baselines}
We compare our model with totally 8 baselines. 
\begin{itemize}
    \item SkipGram models: \textit{LINE} \cite{tang2015line} is a static network embedding model  and \textit{DNE} \cite{du2018dynamic} is an extension of LINE to update node representations efficiently in dynamic environment.
    \item GNNs (Retrained): \textit{GraphSAGE} \cite{hamilton2017inductive}, \textit{GCN} \cite{kipf2016semi} and \textit{EvolveGCN} \cite{pareja2019evolvegcn} are retrained in the entire network at each time step. \textit{GraphSAGE} is also named as \textit{RetrainedGNN}, standing as an upper bound of our model. \textit{EvolveGCN} captures the dynamism of the graph sequence by using an RNN to evolve the GCN parameters.
    \item GNNs (Incremental): Simple incremental learning methods based on GraphSAGE in streaming networks. \textit{PretrainedGNN} represents a GraphSAGE learned at the first time step and not trained any longer. \textit{SingleGNN} is a GraphSAGE trained at every time step individually. \textit{OnlineGNN} is a variant of GraphSAGE trained in a online manner but without any consideration for pattern detection or knowledge consolidation. 
    \item \textit{ContinualGNN} is our model based on GraphSAGE trained incrementally via continual learning.
\end{itemize}

\subsubsection{Parameter Settings}
We set the embedding size of all models in all datasets to 64. 
For LINE and DNE, we use both first-order and second-order proximity representations and concatenate them.
Parameters of all deep models are set consistently. The number of hidden layers is 2, where the size of each layer is set to 64. For each layer, we use mean aggregator with 10 neighbors sampled. For EvolveGCN, we use the version EvolveGCN-O that achieves better performance. For our model, we set $\lambda=(80, 100, 100, 200)$ and $m=(250, 500, 500, 250)$ for Cora, Elliptic, DBLP and Synthetic respectively. We use the ratio of influenced nodes of all potential nodes, including changed nodes and their $L$-hop neighbors, to reflect the detection threshold $\delta$ and set $\%=(0.8,0.8,0.8,0.8)$.

\subsection{Experimental Results}

We employ four datasets for \textit{node classification} task to demonstrate the superior of our model. Data is randomly split into training set and testing set with the ratio $7:3$ at each time step. For supervised models, node labels are inferred directly by models. For unsupervised models, representations obtained from models are classified by Logistic Regression in sklearn package.

\subsubsection{Node Classification}

Table \ref{tab:nc} shows the averaged F1 and accuracy of all baselines. Comparing with SkipGram models and other incremental GNNs, our model, ContinualGNN achieves the best performance and gets the closest results to the upper bound, RetrainedGNN, proving that our model has great advantages in streaming networks. DNE is also an incremental learning method on streaming networks. However, it ignores node attributes when learning representations, and is not a deep model, so the result is uncompetitive compared to the GNN models. It should be noted that EvolveGCN performs poorly on Cora and Synthetic because synthetic streaming data lack temporal patterns.

Figure \ref{fig:nc} shows the accuracy of node classification of four incremental GNNs on continuous time steps. On Cora and Elliptic, it can be seen that the results of our model can maintain a good result over time, indicating that our model can learn new knowledge while consolidating existing patterns. While the results of PretrainedGNN and OnlineGNN both declined to a certain extent, because PretrainedGNN does not learn new knowledge during network evolution, and OnlineGNN fails to maintain old knowledge well, leading to catastrophic forgetting. On DBLP, the results of PretrainedGNN will be very poor, indicating that a large number of new patterns appear after the first time step. The performance of OnlineGNN generally looks good, however, it is very unstable compared to our model, indicating that our model also has a certain smoothing ability, and can maintain stable representations. In addition, on Synthetic, due to the dramatic changes in the network, the results of all models will be attenuated to a certain extent. The overall results of OnlineGNN are getting worse but the results of our model can recover after a short-term deterioration, indicating our model integrate new and historical knowledge well.

Besides, Table \ref{tab:nc} and Figure \ref{fig:nc} also show running time per epoch of each model.
Even though retrained GNNs like RetrainedGNN as the upper bound of our model can maintain great performance over time, their training complexity is excessively high and about 5 times to the incremental ones, which is impractical in large-scale networks. Our proposed model, ContinualGNN can be trained with high efficiency in a incremental manner so that it provides opportunities for various real-world applications. 

\subsubsection{Case study on Synthetic Data}
\label{sec:exp_synthetic}

We conduct experiments on the synthetic network. We construct a specific network, and then through special case study and visualization to prove that our model can deal with the problem of catastrophic forgetting in a streaming network where its distribution shifts over time.

We first introduce the generation method of the synthetic network. The ER graph generator \cite{gilbert1959random} is used to generate new networks for the 0-7th time steps, the degrees of the two categories are 4 and 10, and the community graph generator \cite{fortunato2010community} is used to generate new networks for the 8-23th time steps. The probability of edge generation within and between communities is 0.02 and 0.001 respectively. On the 0-15th time steps, the first dimension of the node attributes is generated by the normal distribution $\mathbb{N}(-1,1)$, and on the 16-23th time steps it is generated by the normal distribution $\mathbb{N}(1,1)$. The other dimensions all follow the standard normal distribution. Through this generation method, the networks of the 0-7th and 8-15th time steps have different network topology distributions, and the networks of the 8-15th and 16-23th time steps have different node attribute distributions.

Figure \ref{fig:acc_v0} shows the performance of nodes arriving at time $t_0$. As time evolves and the GNN model is updated, the classification accuracy changes. It can be seen that at the 0-7th time steps, because the distribution of the network is roughly unchanged, the accuracy rate gradually increases as the training data becomes more. But at time $t_8$ and $t_9$, the network patterns occur changes. For OnlineGNN, training the GNN on new nodes makes the model no longer applicable to the nodes at $t_0$, and catastrophic forgetting occurs, so its classification accuracy greatly decreases. On the contrary, our model, ContinualGNN, its classification accuracy can remain stable.

Figure \ref{fig:vis_v0} also shows the visualization of representations obtained at two adjacent time steps of nodes appearing at $t_0$. At time $t_7$, the GNN model can well separate the two types of nodes. At time $t_8$, when the distribution of the network structure changes, the representations of nodes belonging to two labels through OnlineGNN cannot be distinguished, while ContinualGNN can still distinguish the two kinds of nodes. It demonstrates that when the structure distribution changes, our model can well maintain historical knowledge and avoid catastrophic forgetting.

Similarly, Figure \ref{fig:acc_v8} shows the node classification results of the nodes at $t_8$ with the evolution of time. Figure \ref{fig:vis_v8} shows the visualization results of the nodes of the 8-th time step at two adjacent time steps. They all prove that when the distribution of node attributes changes, ContinualGNN can consolidate the existing knowledge and maintain satisfactory results.

\subsubsection{New Pattern Detection}

We analyzed the accuracy of node classification and the running time of the detection algorithm under different detection thresholds and different detection methods during the detection of new patterns on Synthetic and Cora. \textit{Naive} means to directly calculate the scores of all nodes in the network according to the Equation \ref{equ:influenced_node_set}, \textit{BFS} refers to calculate the scores of all neighbors of changed nodes, and \textit{Approximation} represents our approximate calculation method. 

The results are shown in Figure \ref{fig:detect}. On the one hand, it can be seen that as a larger proportion of new patterns are added during training, the accuracy will also be higher.
On the other hand, it can also be seen that our \textit{Approximation} is comparable to \textit{Naive} and \textit{BFS} in accuracy, indicating that our algorithm can approximate the score of the degree of node influence. But at the same time, thanks to the approximation algorithm only calculating a small number of nodes in the network, the time complexity is lower, so the effectiveness of the approximation algorithm is proved. But notice that on Synthetic, the detection time of \textit{Approximation} and \textit{BFS} is close and this is because the number of changed nodes and influenced nodes are similar in the network.

\begin{figure}[t]
\centering
\subfigure[Accuracy on Synthetic]{
    \label{fig:detect_synthetic_acc}
    \includegraphics[width=0.48\columnwidth]{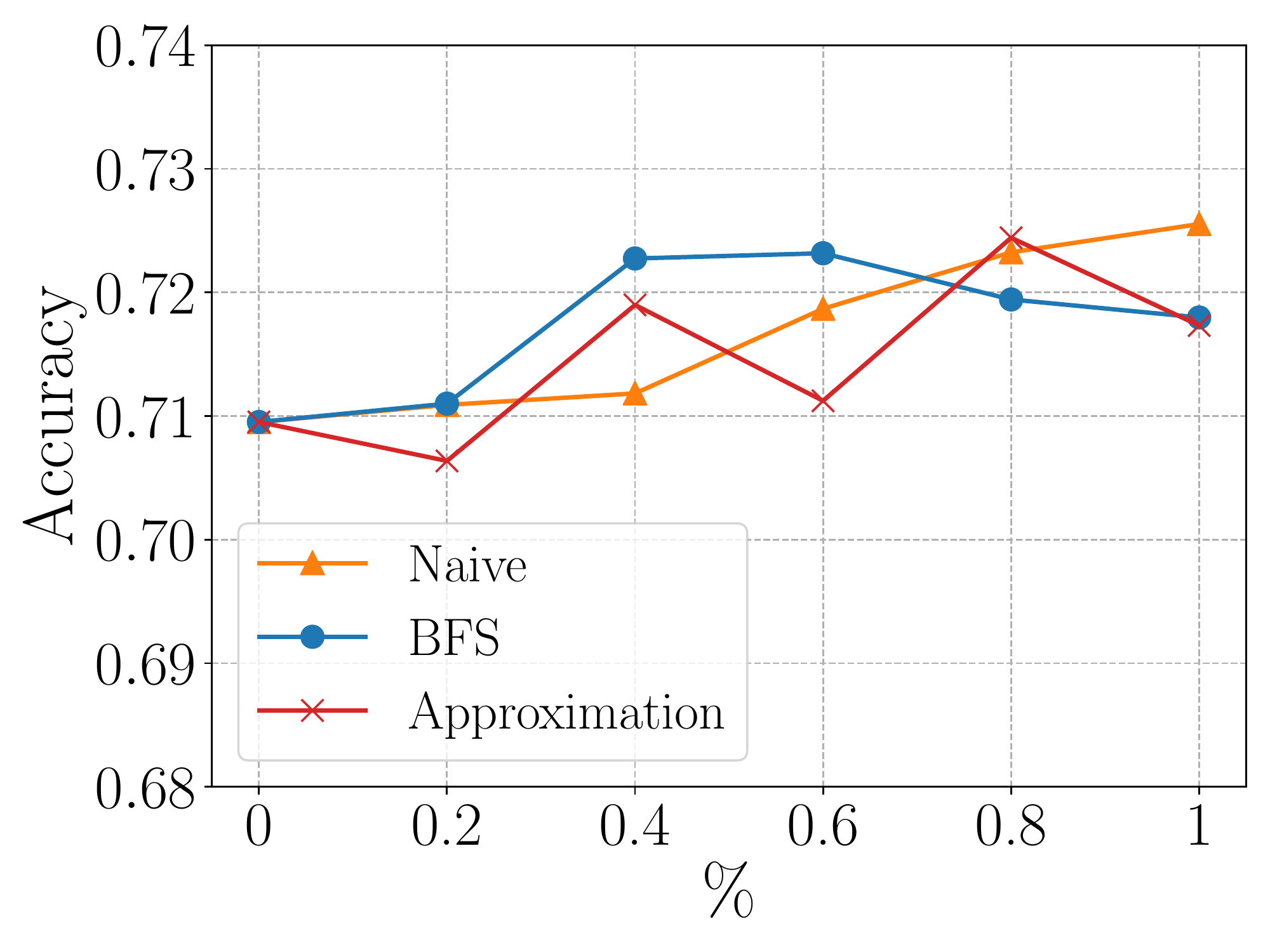}}
\subfigure[Detection Time on Synthetic]{
    \label{fig:detect_synthetic_time}
    \includegraphics[width=0.48\columnwidth]{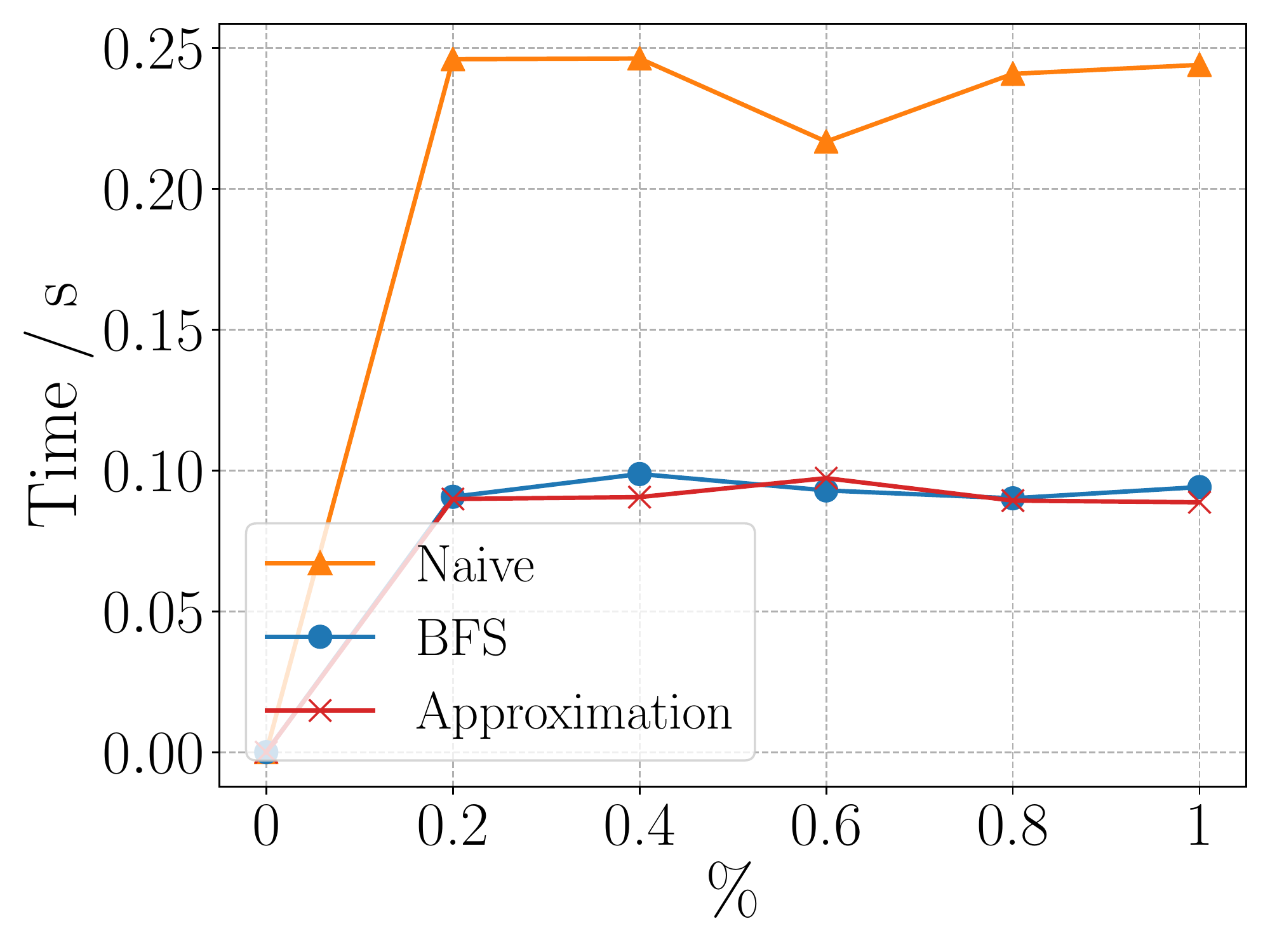}}

\subfigure[Accuracy on Cora]{
    \label{fig:detect_cora_acc}
    \includegraphics[width=0.48\columnwidth]{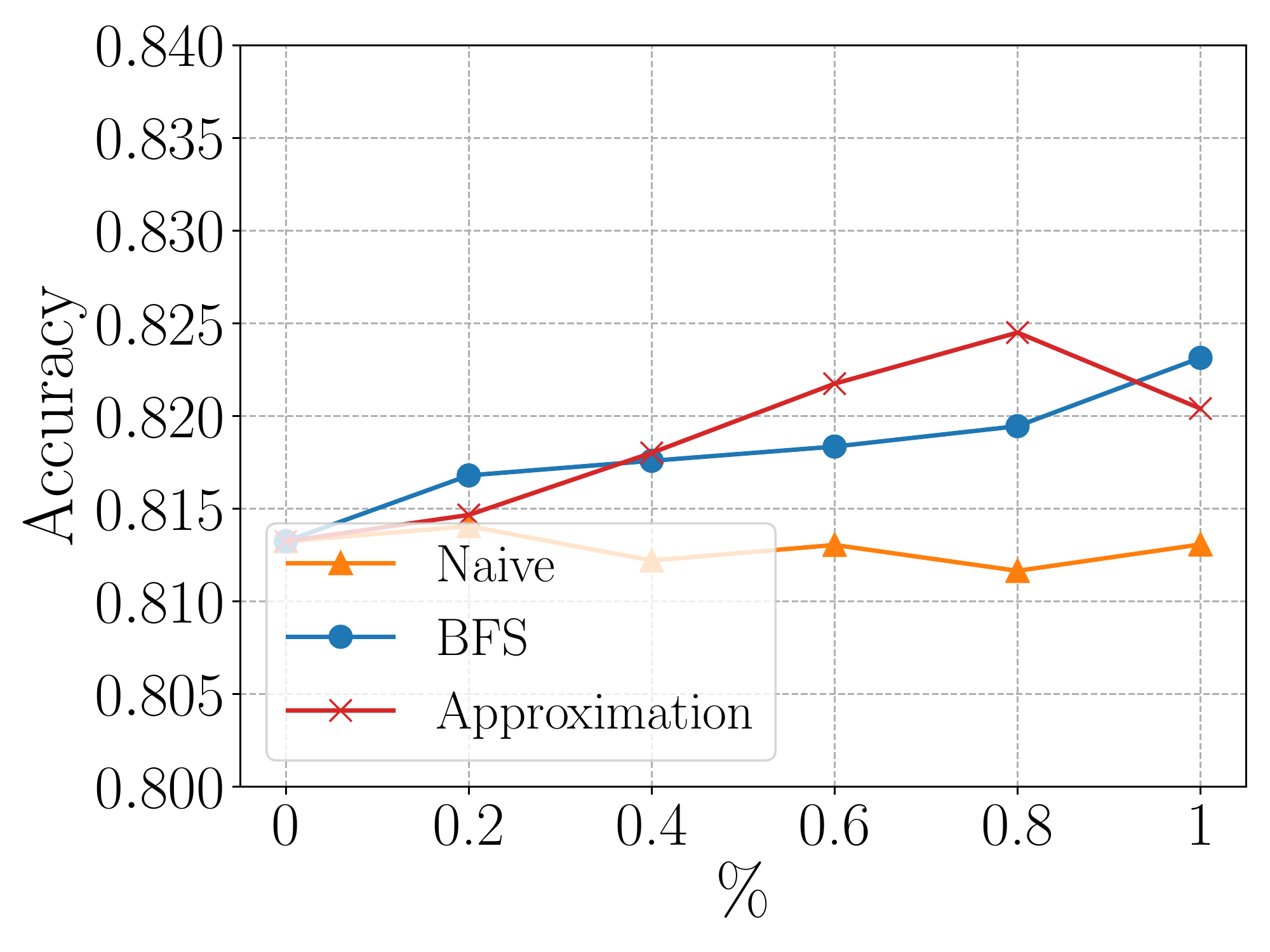}}
\subfigure[Detection Time on Cora]{
\label{fig:detect_cora_time}
    \includegraphics[width=0.48\columnwidth]{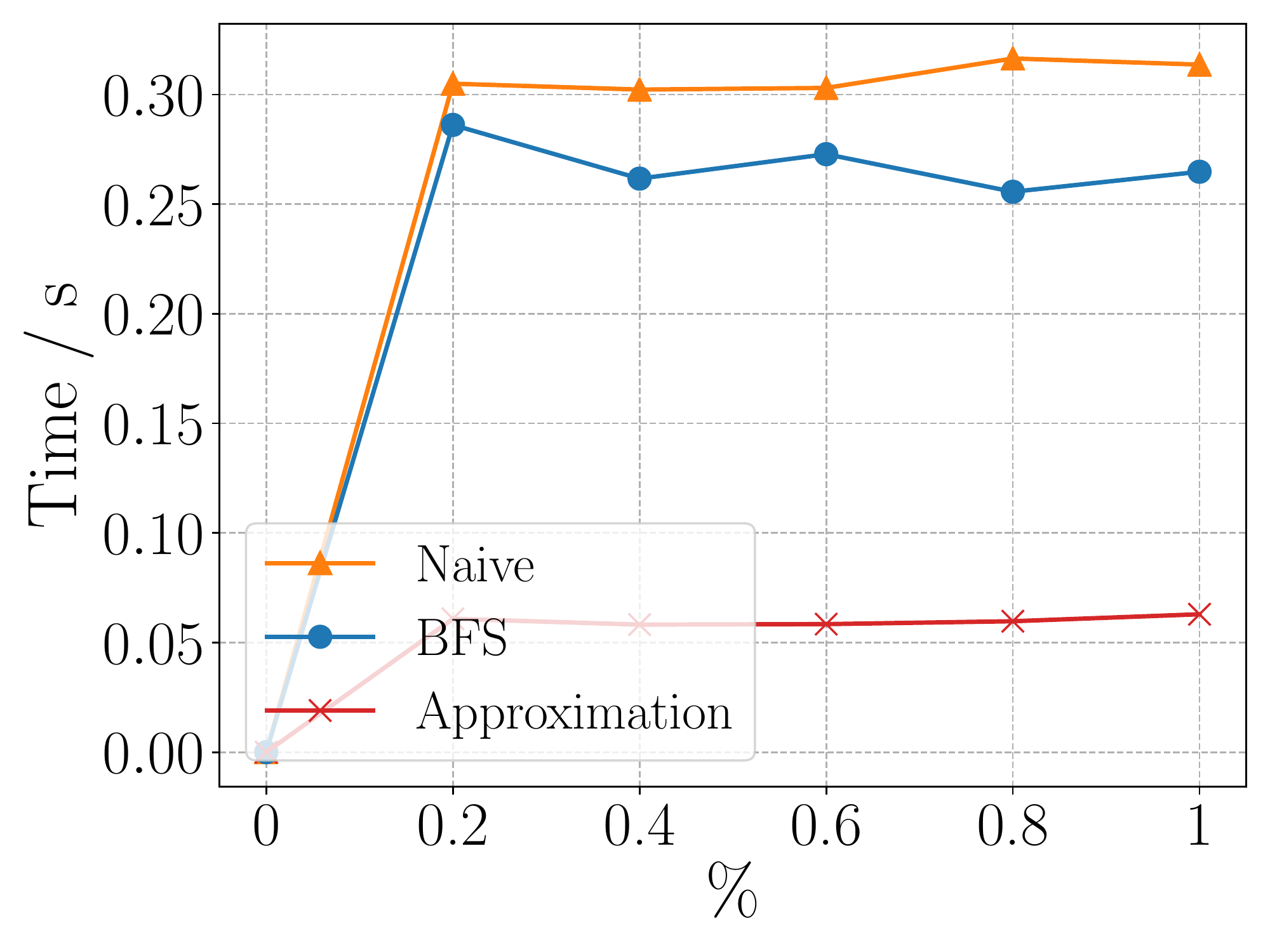}}
    \caption{Analysis of New Pattern Detection}
\label{fig:detect}
\end{figure}

\subsubsection{Existing Pattern Consolidation}

Next, on synthetic data, we discuss the impact of different views during the consolidation of the existing patterns in steaming networks.

We first analyze from a data perspective. Figures \ref{fig:mem_acc} and Figure \ref{fig:mem_time} respectively show the accuracy and running time of node classification on synthetic data, under different sampling strategies and different memory capacities $m$. On the one hand, as the memory capacity increases, the more representative nodes are saved, so the accuracy of the model is higher and the running time is longer. In order to balance the effectiveness and efficiency, we choose the memory size of 250. On the other hand, \textit{Random sampling} refers to simple random sampling, \textit{Hierarchical sampling} means hierarchical sampling without importance, and \textit{Step-wise sampling} is our step-wise sampling strategy. It shows that the sampling strategy we designed can achieve better results, and there is not much difference in the running time of the three. This proves that the step-wise sampling strategy can select nodes that can consolidate historical knowledge more efficiently and stably in streaming networks.

We also analyze the effects of different kinds of regularization terms and different regularization weights. The results are shown in Figures \ref{fig:lambda_acc}. The weighted regularization method we proposed has a better effect than the L2 regularization or no constraint, that is, $\lambda=0$. It proves that our method can better constrain the model and make it have a stronger generalization ability for knowledge preservation. At the same time, the best $\lambda$ value on Synthetic is 200.

Besides, we show the effect of saving existing knowledge from different views on node classification results in Figure \ref{fig:multi_acc}. It can be seen that from either data-view or model-view, the model results are improved to a certain extent. But the combination of the two can make up for each other's deficiencies and achieve the best results.

\begin{figure}[t]
\centering
\subfigure[Accuracy from data-view]{
    \label{fig:mem_acc}
    \includegraphics[width=0.48\columnwidth]{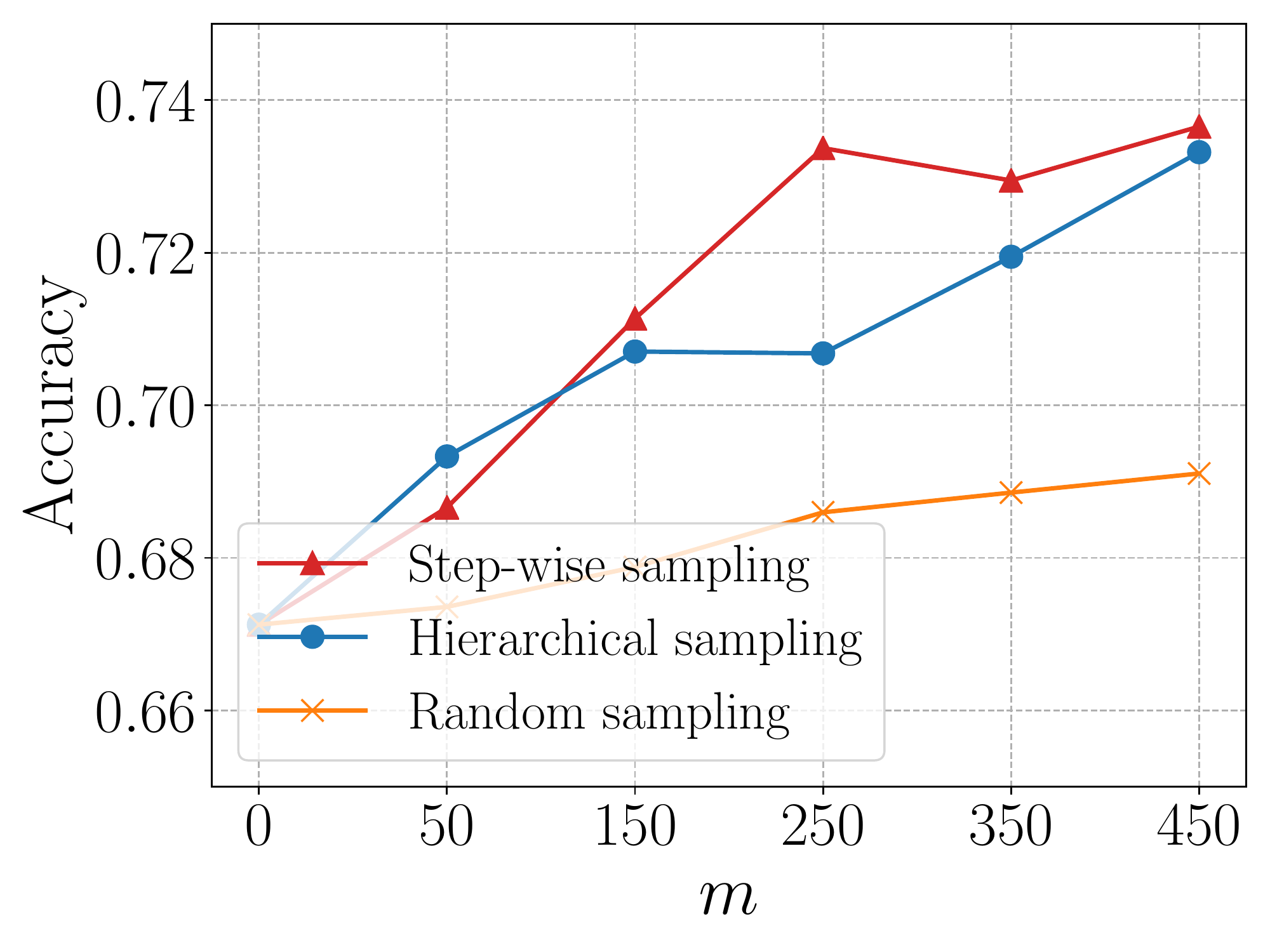}}
\subfigure[Running time from data-view]{
    \label{fig:mem_time}
    \includegraphics[width=0.48\columnwidth]{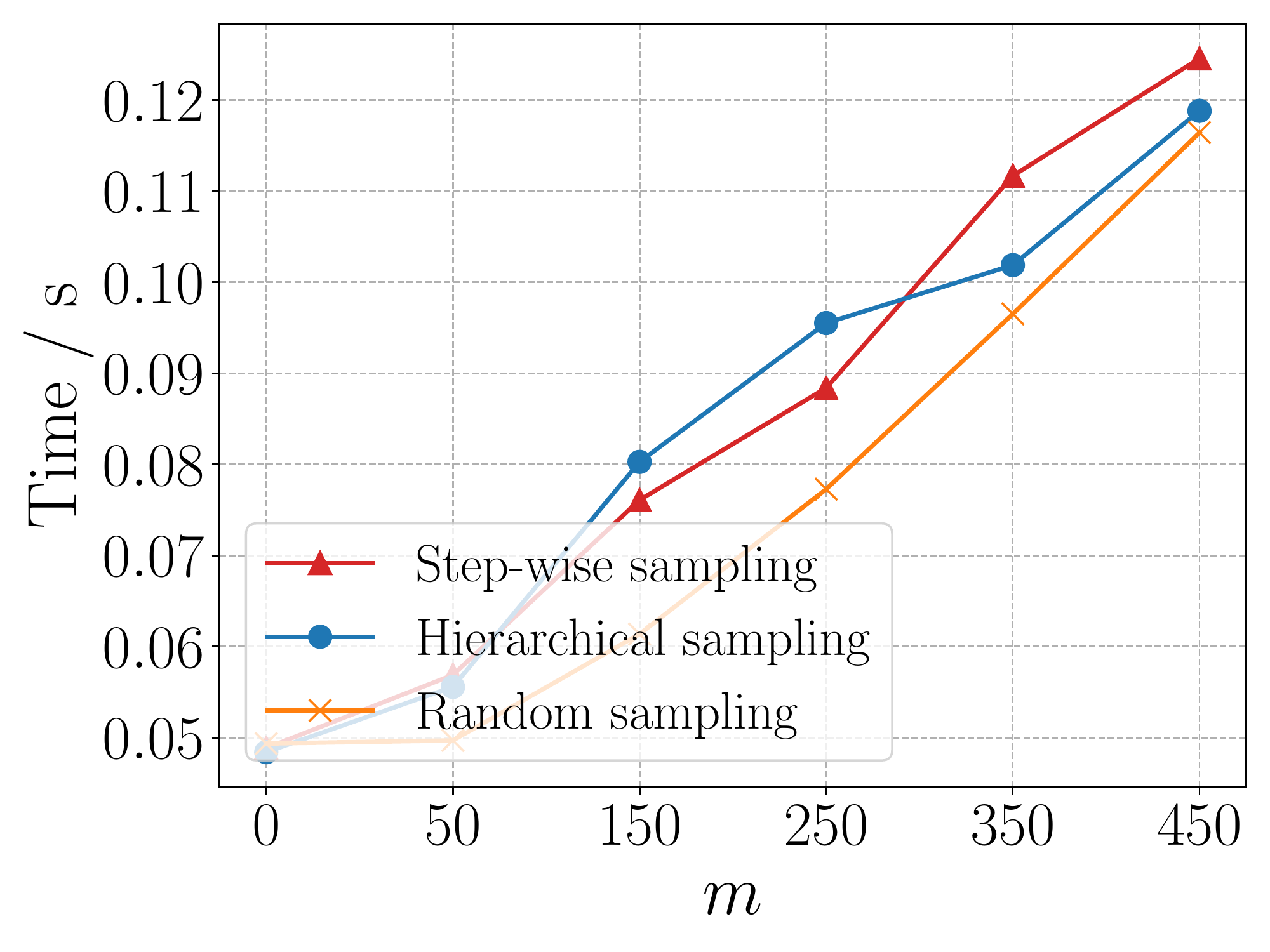}}

\subfigure[Accuracy from model-view]{
    \label{fig:lambda_acc}
    \includegraphics[width=0.48\columnwidth]{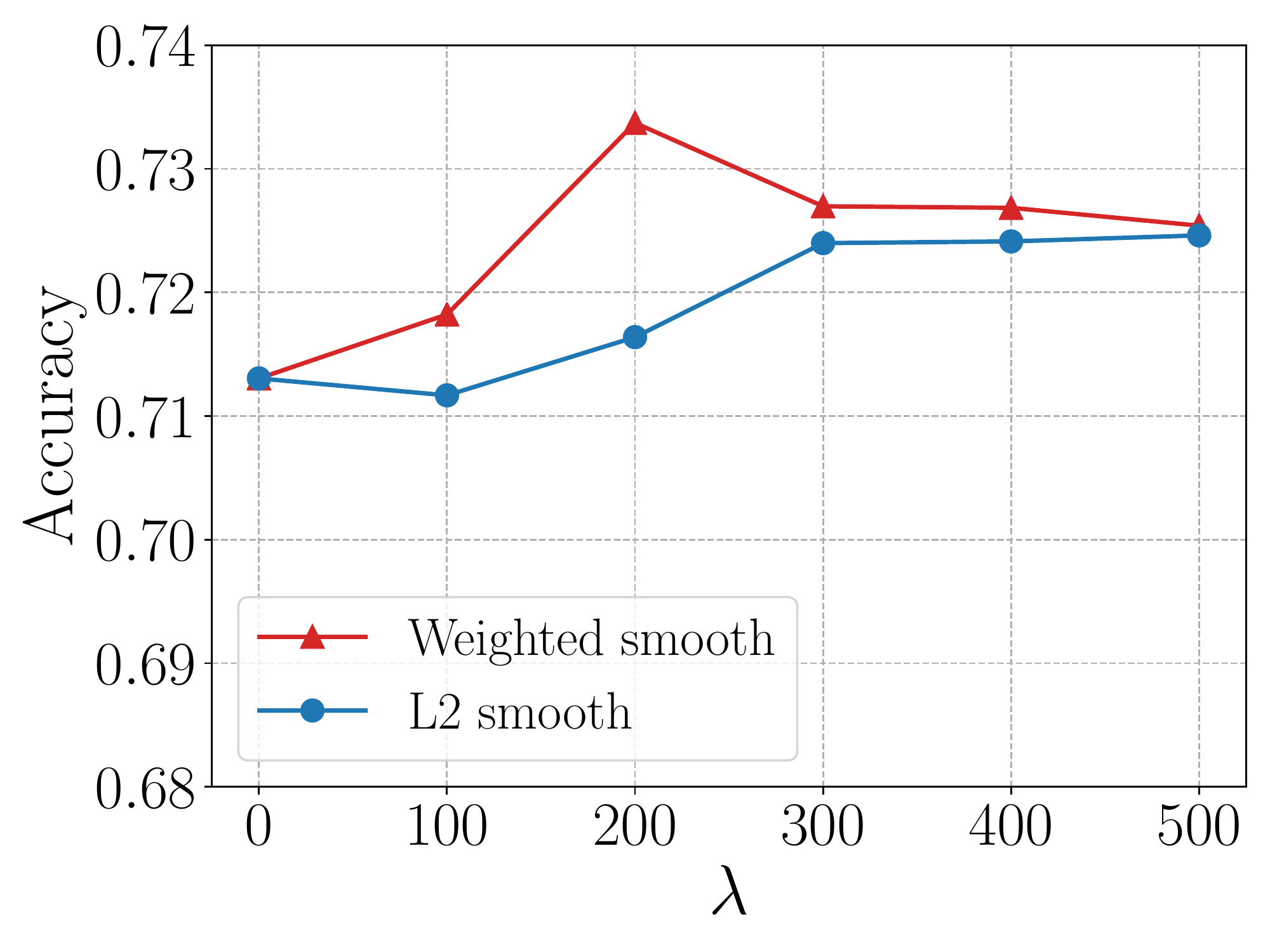}}
\subfigure[Accuracy from multi-view]{
    \label{fig:multi_acc}
    \includegraphics[width=0.48\columnwidth]{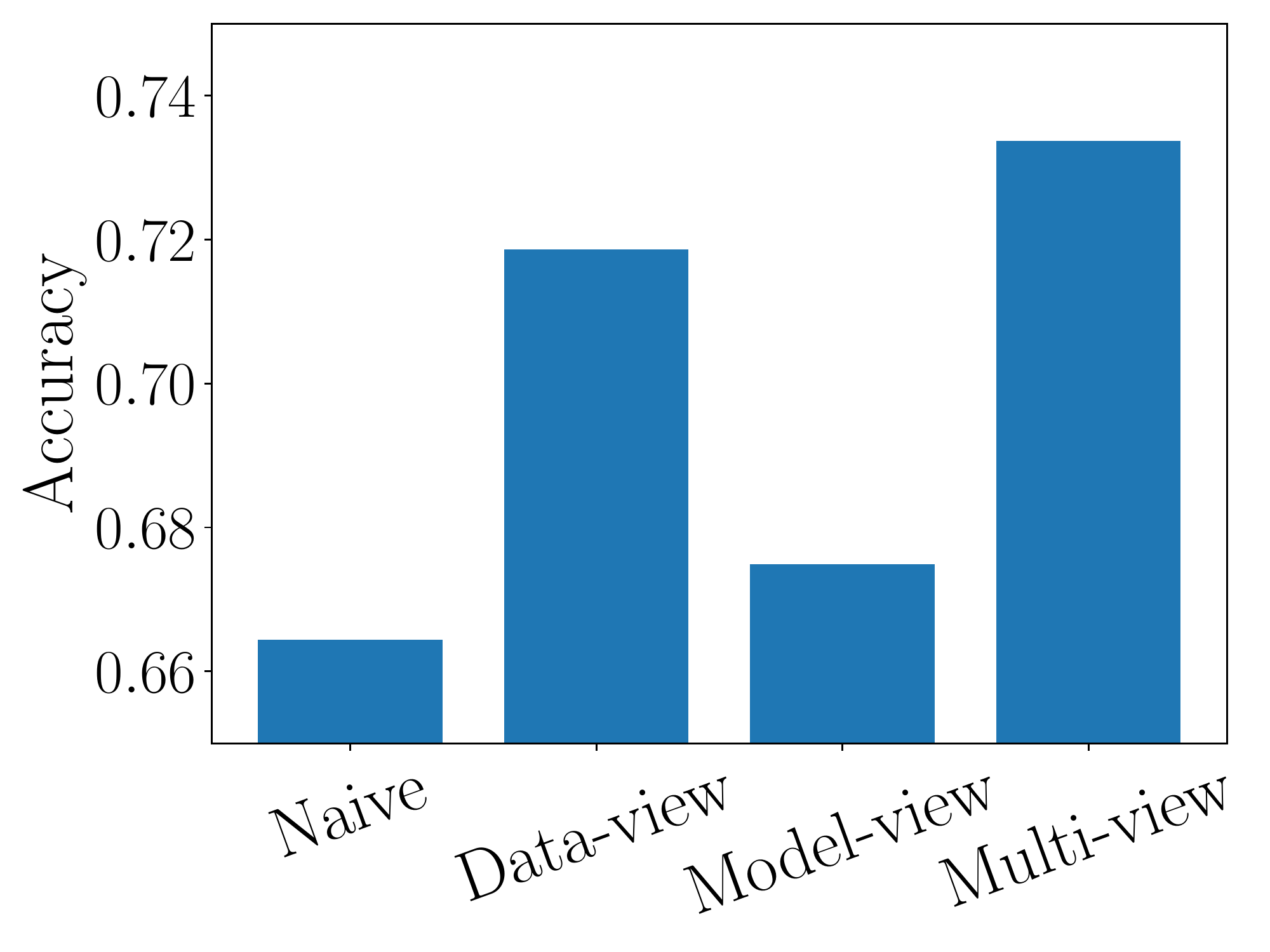}}    
\caption{Analysis of Existing Pattern Consolidation}
\end{figure}

\subsubsection{Scalability}    


Finally, we discuss the scalability of our model from two aspects. 
As shown in Figure \ref{fig:scalability_1}, we evaluate the running time in synthetic networks with different numbers of nodes, which are collected over all training data. It shows that the running time of retraining models increases much faster than our model when the network scale becomes large. But our model can keep converging in a short time, which indicates that the scalability of our proposed model is empirically good. 
Then, in Figure \ref{fig:scalability_2}, we observe the running time with different sizes of streaming data per time step. Obviously, the running time of our model is relative to the size of data per time step and our model have good scalability on real-world streaming networks where the streaming data per time step is often much fewer than the total scale of networks.

\begin{figure}[!htbp]
\centering
\subfigure[Network size]{
    \label{fig:scalability_1}
    \includegraphics[width=0.48\columnwidth]{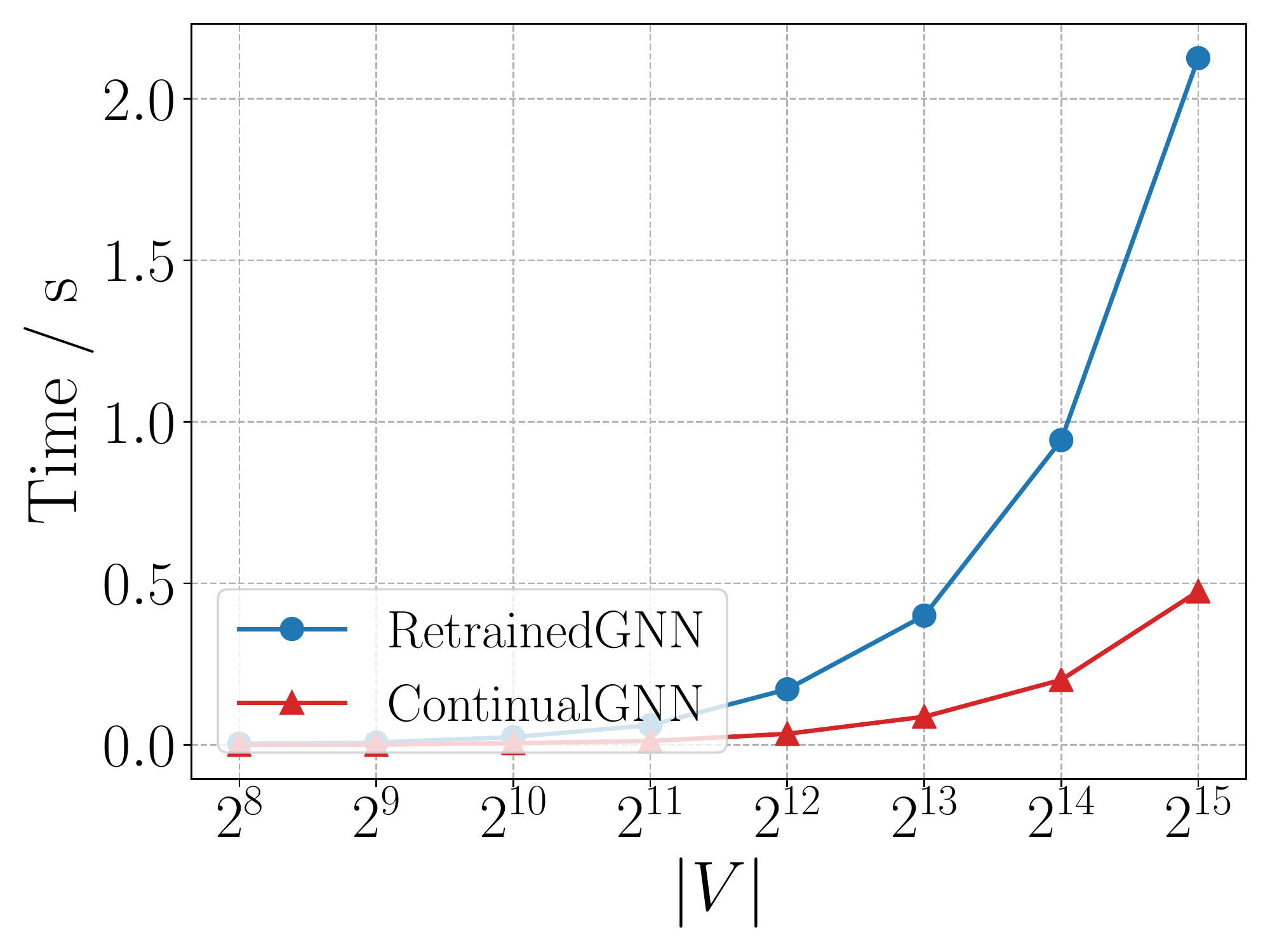}}
\subfigure[Stream size]{
    \label{fig:scalability_2}
    \includegraphics[width=0.48\columnwidth]{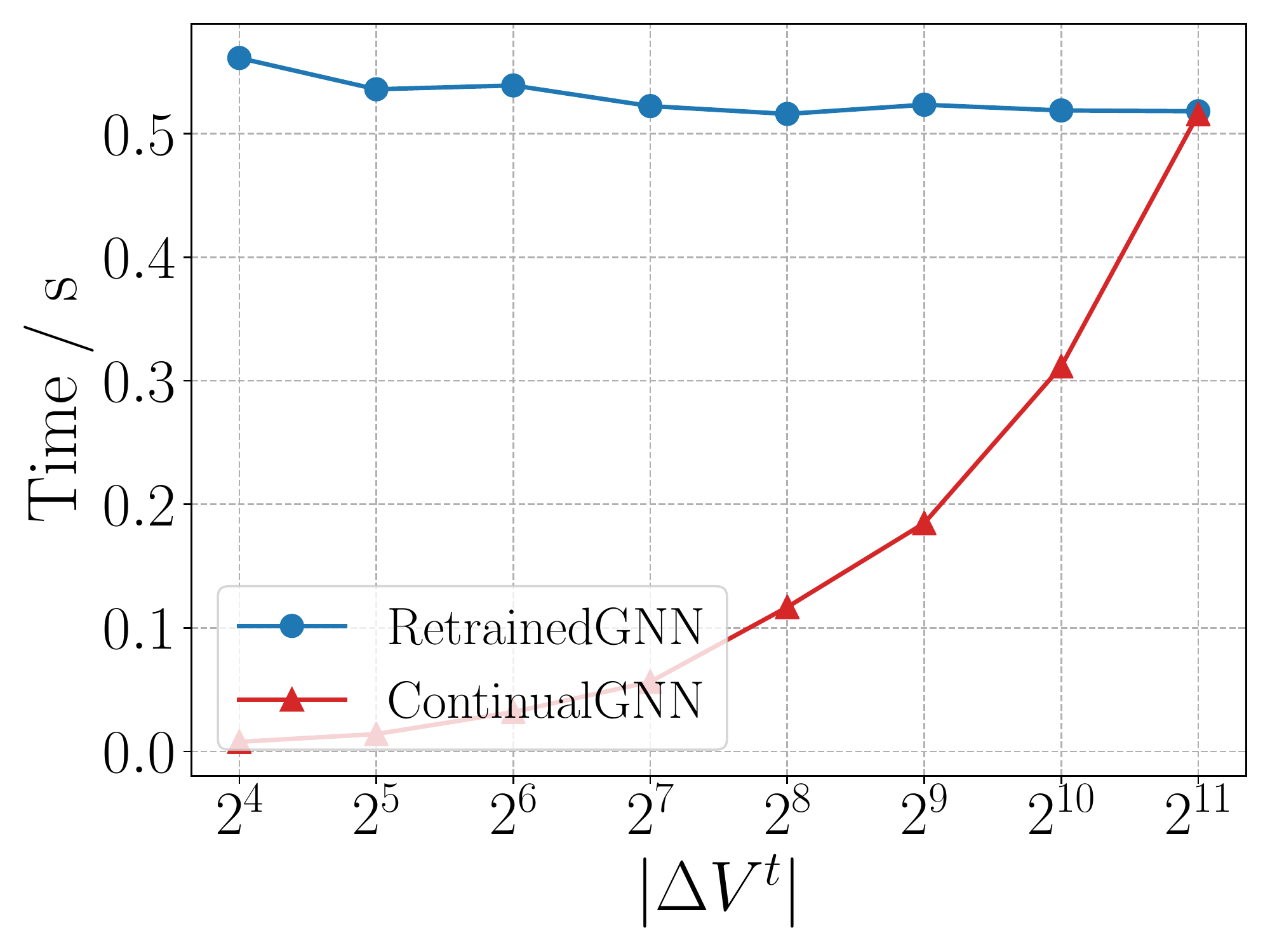}}
\caption{Scalability}
\end{figure}

\section{Conclusion}
In this paper, we address the problem of incremental learning for GNN models based on continual learning when data distribution shifts and new patterns appears over time. We designed an approximation algorithm based on traversal, which can quickly detect new patterns that may exist in a streaming network. Then, based on two complementary perspectives, we propose a method to consolidate the existing knowledge in the network, that is, constraining the current model based on a small portion of historical data and the previous model. 
Experiments on both real-world and synthetic networks prove that our model is more efficient than retraining models but achieves comparable results.



\begin{acks}
We are thankful to Yi Li for his helpful suggestions. 
This work was supported by the National Natural Science Foundation of China (Grant No. 61876006).
\end{acks}

\bibliographystyle{ACM-Reference-Format}
\bibliography{acmart}

\end{document}